\documentclass{article}
\usepackage{PRIMEarxiv}
\usepackage{iftex}
\ifPDFTeX\usepackage[utf8]{inputenc}\fi
\usepackage[numbers]{natbib}
\usepackage[T1]{fontenc}
\usepackage{mathptmx,microtype,amsmath,amssymb,amsthm,booktabs,array,graphicx}
\usepackage{tikz}
\usetikzlibrary{arrows.meta,positioning,calc}
\usepackage{xcolor}
\definecolor{navy}{HTML}{17466C}
\definecolor{teal}{HTML}{178A85}
\definecolor{orange}{HTML}{C97324}
\usepackage[colorlinks=true,allcolors=navy]{hyperref}
\graphicspath{{figures/}}
\usepackage[font=small,labelfont=bf]{caption}
\usepackage{enumitem,float}
\usepackage{placeins}
\newfloat{algorithm}{tbp}{loa}
\floatname{algorithm}{Algorithm}
\setlist{nosep,leftmargin=*}
\newcommand{\FIL}{\textsc{Finite Interaction Lens}}

\newcommand{\B}{\mathcal{B}}
\newcommand{\Dmix}{\Delta_{AB}}
\newcommand{\norm}[1]{\left\lVert#1\right\rVert}
\newcommand{\E}{\mathcal{E}}
\newtheorem{proposition}{Proposition}
\newtheorem{corollary}{Corollary}
\title{Finite Interaction Lens: Structured Precision\\Sharing for Vision--Language Models}
\author{Nozomu Fujisawa\\[3pt]
Keio University\\[3pt]
\href{mailto:teruyo.sui4@gmail.com}{\texttt{teruyo.sui4@gmail.com}}}
\date{}
\hypersetup{
 pdftitle={Finite Interaction Lens: Structured Precision Sharing for Vision-Language Models},
 pdfauthor={Nozomu Fujisawa},
 pdfsubject={Finite Interaction Lens: structured precision sharing for finite internal interventions},
 pdfkeywords={vision-language models, activation patching, mixed precision, finite interactions}
}
\begin{document}
\maketitle
\begin{abstract}
Identifying which visual interventions interact in a vision--language model requires comparing their joint and separate effects. The resulting four-output difference can be small enough for rounding error to obscure it. We introduce the \FIL{} (FIL), a mixed-precision method for evaluating finite intervention families. At each selected linear projection, FIL decomposes each candidate's input into a native state, two single-intervention components, and a full-dimensional mixed residual. It projects the single-intervention components at higher precision and reuses the same numerical contributions in the joint candidates, while retaining every mixed residual and nonlinear trajectory. We derive a local cancellation identity and a conditional bound on error propagation through a smooth nonlinearity. In one MLP of Qwen3-VL-4B-Instruct, FIL reduces final interaction MSE by 94.4\% on 16 synthetic families and by 99.85\% and 99.35\% in two separate four-image COCO studies, relative to a control that has the same nominal BF16-product budget and higher-precision single interventions but projects each joint candidate independently. One image accounts for most of the absolute gain in each COCO study. Higher-accuracy multi-product BF16 arithmetic on all candidates gives lower aggregate error at similar total query times. Single-product BF16 already recovers the reference choices in both candidate-selection tasks we tested. On these development inputs, FIL improves interaction fidelity at a fixed nominal precision budget. Its practical value depends on a task that needs the extra fidelity.
\end{abstract}

\section{Introduction}
Activation patching probes how a model's output changes when an internal representation is replaced or perturbed. For a VLM, comparing interventions on two image regions shows whether their joint effect differs from the sum of their individual effects. Interaction terms can alter the interpretation of patching results~\cite{vaidyanathan2026}. For image regions, such effects could help identify pairs that deserve closer examination, which makes their accurate measurement worthwhile. The calculation subtracts four outputs, so rounding errors that are small at each endpoint can obscure the interaction~\cite{baligar2026}.

We study the arithmetic used to evaluate a fixed grid of intervention strengths while retaining the model's nonlinear computation. Gradient-based estimators screen large candidate spaces~\cite{kramar2024,zhangwang2026}, shared-prefix execution avoids repeated work~\cite{juravsky2024}, and mixed precision reduces the cost of selected operations~\cite{wang2025}. FIL uses the structure within a finite intervention family: each single-intervention component contributes to an axis candidate and to several joint candidates.

At a selected MLP's gate and up projections, \FIL{} decomposes each candidate's input into a native state, two axis components, and a full-dimensional mixed residual. It projects the axes once at higher precision and reuses the same numerical results in all corresponding candidates. With exact component formation and reconstruction, shared axis errors cancel in the interaction at the projection output. Every candidate retains its mixed residual and original nonlinear continuation.

The contribution comprises an execution algorithm (Section~\ref{sec:method}), a local error analysis including nonlinear propagation and finite-precision reconstruction (Section~\ref{sec:theory}), and evaluation in a pretrained VLM (Section~\ref{sec:experiments}). The matched control gives the axis candidates the same higher precision and uses the same nominal BF16-product budget, but projects each joint candidate independently. FIL improves final interaction fidelity on 16 synthetic families and in two four-image natural studies. In the second natural-image study, which compares region pairs, the two methods have identical native and single-intervention outputs, so their difference lies in the joint candidates.

The benefit is relative to this matched control. Applying multi-product BF16 arithmetic to all candidates (full BF3) gives lower aggregate error at similar total query times, and single-product BF16 (BF1) already solves both tested selection tasks. Whether FIL lowers the cost of a decision that needs finer interaction estimates remains open.

\section{Related Work}
\paragraph{Activation patching and interactions.}
What activation patching measures depends on the choice of intervention, reference, and output metric~\cite{heimersheim2024}. Vaidyanathan et al.~\cite{vaidyanathan2026} analyze interaction terms in mediation-based patching estimands and show how they affect component rankings and circuit interpretation. We take a two-intervention response surface as given and study how accurately it is computed. AtP*~\cite{kramar2024} estimates component effects with gradients, and Zhang and Wang~\cite{zhangwang2026} correct such estimates with downstream Hessian--vector products, including a multi-step variant evaluated at matched compute. These estimators are built to screen many candidates. FIL instead runs the full nonlinear computation of every candidate in a small, fixed family.

\paragraph{Execution reuse and precision allocation.}
Hydragen~\cite{juravsky2024} performs exact attention with shared prefixes and hierarchical reuse; SGLang~\cite{zheng2024} uses RadixAttention to reuse KV caches across calls. CAAP~\cite{izadi2026} parallelizes CLS-token evaluation for its prescribed intervention and approximates blank-context attention by assuming that it is nearly invariant. FIL combines identical-history reuse with shared axis projections inside a finite family. PAHQ~\cite{wang2025} keeps the attention components under investigation at higher precision, and ResQ~\cite{saxena2025} does the same for a high-variance PCA subspace. FIL allocates precision by native, axis, and mixed roles and retains each full-dimensional mixed residual.

\paragraph{Quantization and numerical differences.}
DeltaQuant~\cite{li2026} separates spatiotemporal activation cores from low-precision deltas in video diffusion; CFQ~\cite{yahyati2026} targets counterfactual-recourse stability under quantization. QERA~\cite{zhang2024} and LRC~\cite{scetbon2024} reconstruct quantization errors with low-rank corrections. Multi-product BF16 arithmetic~\cite{henry2019} supplies the BF1/BF3 building blocks used here. Computational divided differencing~\cite{repsrall,vavasis2013} transforms numerical programs so that finite differences, including higher-order and multivariate ones, are evaluated accurately; CD-T~\cite{hsu2024} decomposes transformer contributions for circuit discovery. Baligar~\cite{baligar2026} identifies floating-point floors in finite pair-interaction measurements and argues for higher-precision references. Our contribution is an execution rule that applies these tools to the axis and mixed structure of a finite intervention family.

\section{Problem Formulation}
\label{sec:problem}
Consider a frozen VLM $F$, an image--question pair, an intervention site $\ell$, and two fixed donor differences $D_A,D_B$ supported on designated visual tokens. The initial patched state is
\begin{equation}
 X_{ij}=X_{00}+\alpha_iD_A+\beta_jD_B,
 \quad \alpha_i=\beta_i=i/7.\label{eq:input}
\end{equation}
Here $i,j\in\{0,\ldots,7\}$, so a family has 64 candidates. Donors, supports, token order, positions, attention masks, and all model parameters are fixed within a family. The reference $F^\star$ runs the original model forward from each patched state in its reference arithmetic, which is FP32 in the VLM experiments. The target is thus the model's response to these internal interventions. It need not equal the response to an image with both edits, because adding two donor differences does not in general reproduce the state of a jointly edited image.

\paragraph{Output event.}
Let $\ell_{ij,v}$ be the final-position, next-token logit for vocabulary token $v$. The indices $c=1,\ldots,4$ denote red, green, blue, and yellow, respectively. Each fixed, disjoint set $G_c$ contains the one-token spellings in Appendix~\ref{app:tokens}. With vocabulary $V$, define $G_5=V\setminus\bigcup_{c=1}^4G_c$ as OTHER. Every VLM study in this paper scores all five groups:
\begin{align}
 s_{ij,c}&=\log\sum_{v\in G_c}\exp\ell_{ij,v},\quad c=1,\ldots,5,\label{eq:group}\\
 y_{ij,c}&=\frac{\exp s_{ij,c}}{\sum_{d=1}^{5}\exp s_{ij,d}}.\label{eq:group-prob}
\end{align}
Group log-sum-exp is evaluated in FP64 from the model's FP32 logits. The implementation subtracts the common mean of the five group logits before the softmax, which does not change these probabilities in real arithmetic. The losses average over all five components, including OTHER.

The four-color conditional probability is
\begin{equation}
 p_{ij,c}=\frac{\exp s_{ij,c}}{\sum_{d=1}^{4}\exp s_{ij,d}}
 =\frac{y_{ij,c}}{m_{ij}},\quad c=1,\ldots,4,
 \label{eq:conditional}
\end{equation}
where the total color-group mass is
\begin{equation}
 m_{ij}=\sum_{c=1}^{4}y_{ij,c}
 =\frac{\sum_{v\in\cup_{c=1}^4G_c}\exp\ell_{ij,v}}{\sum_{v\in V}\exp\ell_{ij,v}},
 \qquad y_{ij,5}=1-m_{ij}.
 \label{eq:mass}
\end{equation}
Because a small error in $p$ can coexist with a large error in the color mass $m$, we use the five-group output as the primary VLM metric and report conditional color probabilities separately. We use these next-token events to measure arithmetic fidelity and do not evaluate answer accuracy or free-form generation. Let $y^\star$ denote the reference output and $\widehat y$ the output of an approximate execution. The CPU probe in Appendix~\ref{app:reproduce} uses its own four-output softmax.

\paragraph{Interactions and losses.}
For $i,j>0$, the anchored interaction is
\begin{equation}
 I_{ij}=\Dmix y_{ij}
 =y_{ij}-y_{i0}-y_{0j}+y_{00}.
 \label{eq:interaction}
\end{equation}
These 49 vectors are anchored at the native state. With $e=\widehat y-y^\star$ and $\E$ the 112 undirected horizontal/vertical edges, we measure
\begin{align}
 L_{\rm pair}&=\frac{1}{112K}\sum_{(u,v)\in\E}\norm{e_u-e_v}_2^2,\label{eq:pair}\\
 L_{\rm int}&=\frac{1}{49K}\sum_{i,j=1}^{7}\norm{\Dmix e_{ij}}_2^2,\label{eq:int}\\
 L_{\rm end}&=\frac{1}{64K}\sum_{i,j=0}^{7}\norm{e_{ij}}_2^2,
\end{align}
where $K=5$ for all VLM evaluations and $K=4$ for the synthetic CPU probe. The maximum probability error is $\max_{ijc}|e_{ij,c}|$. The edge set contains 14 axis--axis, 14 axis--joint, and 84 joint--joint edges. Interaction RMS is $\sqrt{L_{\rm int}}$, with pooling across equally sized families where specified.

\paragraph{Query cost.}
One query evaluates the complete intervention family. Its cost includes source acquisition (the native and donor forward passes that produce the donor differences), native and history preparation, component formation, casts, projections, reconstruction, and the nonlinear suffix. Section~\ref{sec:setup} gives the timing and aggregation protocol.

\section{Finite Interaction Lens}
\label{sec:method}
\subsection{Exact-history execution and native centering}
Our exact-history executor reuses cached prefix states among candidates with identical effective histories. Before either intervention enters the history, all candidates share the native prefix. After one enters, candidates with the same intervention share its history. After both enter, each joint candidate keeps its own history. The full-branch frontier is the start of the final segment in the history plan before short segments are merged, where all 64 candidates have distinct histories. Approximate projections begin there. Merging short segments can split branches earlier for efficient execution, but those earlier tokens keep FP32 MLP computation.

Native-relative BF1 computes a selected linear projection as
\begin{equation}
 \widehat Z_{ij}=S_0+\B_1(W,H_{ij}-H_{00}),
 \label{eq:delta}
\end{equation}
where $S_0$ is the cached projection evaluated at native precision and $H_{ij}$ is the \emph{reached input} of candidate $(i,j)$: the normalized input that the candidate's own computation produces at this projection. When earlier layers are also approximated, $H_{ij}$ comes from the approximate trajectory, not from the reference. We call this native-relative form \emph{delta}. The \emph{plain} form applies $\B_p(W,H_{ij})$ directly to each non-native input, without subtracting and restoring the native projection. Both forms retain the native candidate at its reference precision. Thus plain BF3 and delta BF3 differ in centering; \emph{full BF3} refers to either form with BF3 on all 63 non-native candidates.

For definiteness, with $Q$ denoting BF16 rounding, define $h_h=Q(h)$, $h_l=Q(h-h_h)$ and analogous weight components. BF1 uses $W_hh_h$. BF3 forms
\begin{equation}
 \B_3(W,h)=W_hh_h+W_hh_l+W_lh_h,
 \label{eq:bf3}
\end{equation}
with FP32 accumulation and reconstruction in our GPU implementation. BF3 omits the low--low product and still has finite accumulation error. If $W_l=0$, only two nonzero products remain, and an optimized baseline can skip the zero product once the checkpoint has been checked (Section~\ref{sec:setup}).

\subsection{Decomposition at the reached input}
At each selected gate or up projection, define
\begin{align}
 A_i &=H_{i0}-H_{00}, &B_j&=H_{0j}-H_{00},\\
 C_{ij}&=H_{ij}-H_{i0}-H_{0j}+H_{00}.&&\label{eq:components}
\end{align}
In real arithmetic,
\begin{equation}
 H_{ij}=H_{00}+A_i+B_j+C_{ij}.
\end{equation}
FIL keeps every coordinate of every $C_{ij}$. Normalization, attention, and MLP nonlinearities create mixed components even from the additive initial family.

For each of the gate and up projections, the axis projections are computed once,
\begin{equation}
 U_i=\B_{p_A}(W,A_i),\quad V_j=\B_{p_B}(W,B_j),
\end{equation}
and the same stored arrays are reused in
\begin{align}
 \widehat Z_{i0}&=S_0+U_i, &\widehat Z_{0j}&=S_0+V_j,\\
 \widehat Z_{ij}&=S_0+U_i+V_j+\B_{p_C}(W,C_{ij}).&&\label{eq:factored}
\end{align}
Figure~\ref{fig:scheme} shows where this arithmetic sits in the decoder. Proposition~1 below requires that corresponding candidates receive identical axis contributions. Caching guarantees this and also saves work. Recomputing the axis projections would satisfy the requirement as well, but only if operands, dtype, kernel, shape, and reduction order reproduce identical values.

\emph{Factored BF1} sets all three precisions to BF1. \emph{Factored axes BF3}, called FIL in the experiments, uses BF3 for the 14 axes and BF1 for the 49 mixed components. We apply the same choice to gate and up projections. Each candidate then executes its own SiLU, elementwise product, full down projection, and residual update. In a multilayer extension, the next selected layer would decompose its own reached input again. The FIL experiments below approximate a single MLP.

\begin{figure}[!htbp]
\centering
\begin{tikzpicture}[font=\small,>=Stealth,
 box/.style={draw=navy!65,rounded corners=2pt,align=center,inner sep=5pt,minimum height=1.05cm},
 flow/.style={->,thick,draw=navy!75}]
\node[anchor=west,font=\small\bfseries] at (-.1,1.25) {(a) One decoder MLP; all 64 trajectories kept};
\node[box,text width=3.05cm] (h) at (1.6,0) {Reached states $H_{ij}$\\Original attention\\+ normalization};
\node[box,text width=3.05cm,fill=teal!9] (p) at (5.5,0) {FIL gate / up\\$\widehat g_{ij},\widehat u_{ij}$\\at decoder 13};
\node[box,text width=3.1cm] (n) at (9.5,0) {Original SwiGLU\\$\operatorname{SiLU}(\widehat g)\odot\widehat u$\\full down + residual};
\node[box,text width=3.05cm] (o) at (13.3,0) {Original suffix\\Joint-specific KV\\five-group output};
\draw[flow] (h)--(p);\draw[flow] (p)--(n);\draw[flow] (n)--(o);
\node[anchor=west,font=\small\bfseries] at (-.1,-1.45) {(b) Inside each selected gate/up projection};
\node[box,text width=2.9cm,minimum height=4.0cm] (split) at (1.6,-3.85) {Decompose the\\same reached input\\[4pt]$H_{ij}=H_{00}$\\$+A_i+B_j+C_{ij}$\\[4pt]All mixed\\coordinates kept};
\node[box,text width=3.25cm,fill=gray!8,minimum height=.8cm] (base) at (6.1,-2.15) {1 native input\\Original precision: $S_0$};
\node[box,text width=3.25cm,fill=teal!13,minimum height=.9cm] (axis) at (6.1,-3.85) {14 axes (BF3)\\$U_i=\B_3(W,A_i)$\\$V_j=\B_3(W,B_j)$};
\node[box,text width=3.25cm,fill=orange!12,minimum height=.8cm] (mix) at (6.1,-5.55) {49 mixed (BF1)\\$T_{ij}=\B_1(W,C_{ij})$};
\draw[flow] (split.east)--++(.45,0)|-(base.west);
\draw[flow] (split.east)--(axis.west);
\draw[flow] (split.east)--++(.45,0)|-(mix.west);
\node[box,text width=5.25cm,minimum height=4.0cm] (join) at (12,-3.85) {Reconstruct every candidate\\[4pt]
$\widehat Z_{00}=S_0$\\
$\widehat Z_{i0}=S_0+U_i$\\
$\widehat Z_{0j}=S_0+V_j$\\
$\widehat Z_{ij}=S_0+U_i+V_j+T_{ij}$\\[4pt]
\textcolor{teal}{Same stored $U_i,V_j$}\\in axis and joint candidates};
\draw[flow] (base.east)--++(.4,0)-|([xshift=-.45cm]join.north west)--(join.north west);
\draw[flow,teal,line width=1.2pt] (axis.east)--(join.west);
\draw[flow] (mix.east)--++(.4,0)-|([xshift=-.45cm]join.south west)--(join.south west);
\end{tikzpicture}
\caption{FIL within decoder 13. (a) The gate/up projections use structured arithmetic; each candidate retains its nonlinear suffix and KV history. (b) Native, axis, and full mixed projections reconstruct all 64 candidates. The same stored axis projections enter the axis and joint outputs.}
\label{fig:scheme}
\end{figure}

\begin{algorithm}[t]
\caption{Structured projection in a finite family.}
\label{alg:fil}
\small
\hrule\vspace{5pt}
\textbf{Input:} reached $8\!\times\!8$ states, original weights, candidate histories, and the selected projections.\par
\begin{enumerate}
\item Execute original candidate-specific attention and normalization. Reuse a history only where it is identical.
\item For each selected gate/up projection, read the native input and form all $A_i,B_j,C_{ij}$ in FP32.
\item Evaluate $S_0$ once at native precision. Compute and store $U_i,V_j$ with the prescribed axis precision.
\item Project all 49 full mixed components and reconstruct axis/joint outputs using Eq.~\eqref{eq:factored}.
\item Apply original nonlinearities, multiplication, full down projection, and residual to every candidate.
\item Continue the original suffix; repeat the decomposition only at a selected projection. Return all outputs.
\end{enumerate}
\vspace{5pt}\hrule
\end{algorithm}

\subsection{Matched control and operation count}
The matched control, \emph{axes-only BF3}, uses native-relative BF3 on the 14 axis candidates and native-relative BF1 on the full change of each of the 49 joint candidates. FIL uses the same nominal row-product budget, splitting each joint change into axis and mixed components and reusing the axis projections. This comparison tests the combined effect of factoring and numerical reuse beyond higher axis precision alone.

Excluding the common native projection, BF1 requires 63 row-products per token and projection. With generic three-product BF3, FIL's allocation uses $14\cdot3+49=91$, versus $63\cdot3=189$ for full BF3. For our checkpoint the BF16 residual of the gate and up weights is exactly zero, so after skipping that product the counts are 77 for FIL and axes-only BF3 and 126 for full BF3. These counts leave out the work of decomposition and reconstruction, which grows with the grid and feature sizes, and the temporary storage for the axis projections. They therefore do not predict the total query time, which we measure directly (Section~\ref{sec:cost}).

For an $N\times N$ grid and an axis projection costing $p$ row-products, the corresponding counts are $(N-1)^2+2p(N-1)$ for FIL and $p(N^2-1)$ for full higher-precision projection. Their ratio approaches $1/p$ as $N$ grows. Component formation, reconstruction, and the number of nonlinear trajectories still grow quadratically, so a larger grid alone does not ensure a runtime advantage.

\section{Error Analysis}
\label{sec:theory}
\subsection{Local linear cancellation}
For this local result, define $Z_{ij}=WH_{ij}$ in real arithmetic at the same reached inputs used by the approximate projection. This reference differs from the original-model trajectory $WH_{ij}^{\star}$ once earlier approximations have changed the inputs. Define
\begin{align}
 e_0&=S_0-WH_{00},\\
 e_i^A&=\B_{p_A}(W,A_i)-WA_i,\qquad
 e_j^B=\B_{p_B}(W,B_j)-WB_j,\\
 e_{ij}^C&=\B_{p_C}(W,C_{ij})-WC_{ij}.
 \label{eq:projection-errors}
\end{align}
Assume for the identity that components and sums are formed in exact arithmetic, while the component projection routines may be approximate. Set absent components and their projection contributions on the axes to zero.
\begin{samepage}
\begin{proposition}[Shared-error cancellation]
If the same native and axis projection values are reused in every relevant candidate, then
\begin{equation}
 \Dmix(\widehat Z-Z)_{ij}=e^C_{ij}.
 \label{eq:cancel}
\end{equation}
\end{proposition}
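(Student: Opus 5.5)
The plan is to write every reconstructed output as its exact real-arithmetic value plus the sum of the projection errors it reuses, and then apply $\Dmix$ term by term. The main work is bookkeeping: each candidate must be seen to receive the \emph{same} error terms, and $\Dmix$ must annihilate everything that does not depend jointly on $(i,j)$.

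\textbf{Step 1: per-candidate error expressions.} Using Eq.~\eqref{eq:factored}, the definitions in Eq.~\eqref{eq:projection-errors}, and linearity of $W$ in real arithmetic, we write each candidate's error. For the native candidate, $\widehat Z_{00}-Z_{00}=S_0-WH_{00}=e_0$. For an axis candidate,
\[
\widehat Z_{i0}-Z_{i0}=(S_0+U_i)-W(H_{00}+A_i)=e_0+e^A_i ,
\]
and similarly $\widehat Z_{0j}-Z_{0j}=e_0+e^B_j$. These identities use $H_{i0}=H_{00}+A_i$ and $H_{0j}=H_{00}+B_j$, which hold exactly because components are formed in exact arithmetic. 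For a joint candidate, the exact identity $H_{ij}=H_{00}+A_i+B_j+C_{ij}$ gives
\[
\widehat Z_{ij}-Z_{ij}=e_0+e^A_i+e^B_j+e^C_{ij}.
\]
The key hypothesis enters here. The $U_i$ in $\widehat Z_{ij}$ is the same stored array as in $\widehat Z_{i0}$, so $e^A_i$ is literally the same vector in both expressions. The same holds for $V_j$ and $e^B_j$, and for $S_0$ and $e_0$. Without reuse, the joint candidate would carry some different $\tilde e^A_{i}$, and these terms would not cancel.

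\textbf{Step 2: apply $\Dmix$.} Then
\[
\Dmix(\widehat Z-Z)_{ij}=(e_0+e^A_i+e^B_j+e^C_{ij})-(e_0+e^A_i)-(e_0+e^B_j)+e_0 .
\]
Collecting terms, $e_0$ appears with coefficients $1-1-1+1=0$, $e^A_i$ with $1-1=0$, and $e^B_j$ with $1-1=0$. Only $e^C_{ij}$ remains, which is Eq.~\eqref{eq:cancel}.

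\textbf{Step 3: edge cases and convention.} For the convention that absent components are zero, we note that $A_0=B_0=0$ and that the axis contributions $U_0,V_0$ and their errors are set to zero. As a result, the formula of Step 1 covers all $(i,j)$ uniformly, including those with $i=0$ or $j=0$.

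The only real obstacle is conceptual rather than computational. We must be precise that $Z$ is taken at the \emph{reached} inputs, not at the reference trajectory, so that $W$ acts linearly on exactly the $H_{ij}$ being decomposed. We must also be precise that all additions are exact. If reconstruction were performed in floating point, extra rounding terms would appear that $\Dmix$ does not cancel; those terms are deferred to the later finite-precision analysis.
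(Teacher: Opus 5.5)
Your proposal is correct and follows the paper's own proof. The paper likewise writes the native, axis, and joint projection errors as $e_0$, $e_0+e^A_i$, $e_0+e^B_j$, and $e_0+e^A_i+e^B_j+e^C_{ij}$, using exact formation and reconstruction and the identical reused $S_0,U_i,V_j$, and then substitutes them into the four-term difference $\Dmix$ so that everything except $e^C_{ij}$ cancels.
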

\begin{proof}
The joint projection error is $e_0+e^A_i+e^B_j+e^C_{ij}$, the axis errors are $e_0+e^A_i$ and $e_0+e^B_j$, and the native error is $e_0$. Substitution into the four-term difference cancels the common terms.
\end{proof}
\end{samepage}
Independent delta execution instead rounds $A_i+B_j+C_{ij}$ as one operand in each joint, and its error need not split into axis and mixed parts. Accurate axis outputs therefore do not guarantee accurate interactions. Figure~\ref{fig:cancellation} contrasts the two local error structures.

\begin{figure}[!htbp]
\centering
\begin{tikzpicture}[font=\small,>=Stealth]
\node[anchor=west,font=\small\bfseries] at (0,2.3) {(a) Independent delta rounding};
\node[draw=orange!70,fill=orange!7,rounded corners,align=left,text width=6.6cm,inner sep=8pt] (ind) at (3.5,.55) {Joint: $e^{D}_{ij}$ from $\B(W,A_i+B_j+C_{ij})$\\[5pt]
Axes: $e^{D}_{i0}$ and $e^{D}_{0j}$\\[5pt]
$\Dmix e=e^{D}_{ij}-e^{D}_{i0}-e^{D}_{0j}$\\[5pt]
Each error comes from its own rounding.};
\node[anchor=west,font=\small\bfseries] at (7.7,2.3) {(b) Shared native and axis contributions};
\node at (8.4,1.65) {Candidate};
\node at (10.1,1.65) {Native};\node at (11.3,1.65) {Axis A};\node at (12.5,1.65) {Axis B};\node at (13.9,1.65) {Mixed};
\foreach \y/\lab/\sgn in {1.05/ij/+,0.4/i0/-, -.25/0j/-, -.9/00/+}{
 \node[anchor=east] at (8.9,\y) {$\sgn\, e_{\lab}$};
 \node[fill=gray!12,minimum width=1cm,minimum height=.5cm] at (10.1,\y) {$e_0$};}
\foreach \y in {1.05,.4}{\node[fill=teal!15,minimum width=1cm,minimum height=.5cm] at (11.3,\y) {$e_i^A$};}
\foreach \y in {1.05,-.25}{\node[fill=teal!15,minimum width=1cm,minimum height=.5cm] at (12.5,\y) {$e_j^B$};}
\node[fill=orange!15,minimum width=1cm,minimum height=.5cm] at (13.9,1.05) {$e_{ij}^C$};
\draw[thick] (9.55,-1.3)--(14.5,-1.3);
\node at (10.1,-1.65) {$0$};\node at (11.3,-1.65) {$0$};\node at (12.5,-1.65) {$0$};
\node[draw=orange,rounded corners,inner sep=3pt] at (13.9,-1.65) {$e_{ij}^C$};
\node[anchor=west,align=left,text width=14.8cm] at (0,-2.6) {With finite-precision formation and reconstruction (Eq.~\eqref{eq:finite-error}):
$\displaystyle \Dmix(\widehat Z-WH)=q^C+W(\widetilde C-C)+\Dmix r$.};
\end{tikzpicture}
\caption{Local projection-error structure. Under exact component formation and reconstruction (Proposition~1), shared native and axis errors cancel in the mixed difference. The bottom row shows the additional finite-precision terms from Eq.~\eqref{eq:finite-error}.}
\label{fig:cancellation}
\end{figure}

\FloatBarrier
\subsection{A conditional nonlinear bound}
Let $g$ be a differentiable nonlinear map acting on the projection output; gate and up vectors can be concatenated into its input. At one joint, denote the exact projected components by $z,a,b,c$ and the shared errors by $e_a,e_b,e_c$. For this result, the native projection $z$ is exact. Let $\epsilon_a=\norm{e_a}$, and similarly for $b,c$. Suppose $\norm{Dg}\le L_1$ and $Dg$ is $L_2$-Lipschitz on a convex region containing all evaluation and perturbation paths.
Define the interaction error explicitly as
\begin{align}
 E_g={}&g(z+a+b+c+e_a+e_b+e_c)-g(z+a+b+c)\nonumber\\
 &-g(z+a+e_a)+g(z+a)-g(z+b+e_b)+g(z+b).
 \label{eq:eg}
\end{align}
The native terms cancel because the native projection is exact in this result.
\begin{proposition}[Leakage through a smooth local map]
The interaction error $E_g$ obeys
\begin{align}
 \norm{E_g}\le{}&L_1\epsilon_c\nonumber\\
 &+L_2\epsilon_a\left(\norm{b+c}+\frac{\epsilon_b+\epsilon_c}{2}\right)\nonumber\\
 &+L_2\epsilon_b\left(\norm{a+c}+\frac{\epsilon_a+\epsilon_c}{2}\right).
 \label{eq:bound}
\end{align}
\end{proposition}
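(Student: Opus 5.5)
The plan is a telescoping decomposition of the perturbed joint term, followed by pairing each piece with the matching axis difference in $E_g$ and comparing them through the integral form of the mean value theorem. Write $x=z+a+b+c$. I will first peel off the mixed error:
\[
 g(x+e_a+e_b+e_c)-g(x)=\bigl[g(x+e_a+e_b+e_c)-g(x+e_a+e_b)\bigr]+\bigl[g(x+e_a+e_b)-g(x)\bigr].
\]
The first bracket has norm at most $L_1\epsilon_c$ by the bound $\norm{Dg}\le L_1$ along the segment. Convexity of the region keeps every segment used here inside it. This gives the first term of Eq.~\eqref{eq:bound}.

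The remainder is
\[
 R=g(x+e_a+e_b)-g(x)-\bigl[g(z+a+e_a)-g(z+a)\bigr]-\bigl[g(z+b+e_b)-g(z+b)\bigr].
\]
I will split $g(x+e_a+e_b)-g(x)$ as $[g(x+e_a)-g(x)]+[g(x+e_a+e_b)-g(x+e_a)]$. Each piece then pairs with one axis difference, and both pairs are written with the fundamental theorem of calculus. For the first pair,
\[
 \int_0^1\bigl(Dg(x+te_a)-Dg(z+a+te_a)\bigr)e_a\,dt .
\]
The two evaluation points differ by $b+c$, so the Lipschitz property of $Dg$ bounds this pair by $L_2\epsilon_a\norm{b+c}$. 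For the second pair,
\[
 \int_0^1\bigl(Dg(x+e_a+te_b)-Dg(z+b+te_b)\bigr)e_b\,dt .
\]
The points differ by $a+c+e_a$, which gives $L_2\epsilon_b(\norm{a+c}+\epsilon_a)$.

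The main obstacle is matching the symmetric form of Eq.~\eqref{eq:bound}, because this ordering charges the whole cross term $\epsilon_a\epsilon_b$ to the $b$ pair. I will repeat the argument with the roles of $a$ and $b$ swapped and average the two identities for $R$. The resulting bound is
\[
 L_2\epsilon_a\Bigl(\norm{b+c}+\tfrac{\epsilon_b}{2}\Bigr)+L_2\epsilon_b\Bigl(\norm{a+c}+\tfrac{\epsilon_a}{2}\Bigr).
\]
Adding $L_1\epsilon_c$, this is dominated by the right-hand side of Eq.~\eqref{eq:bound}, since the extra terms $L_2\epsilon_a\epsilon_c/2$ and $L_2\epsilon_b\epsilon_c/2$ are nonnegative.

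If one wants those $\epsilon_c$ cross terms to arise naturally rather than by domination, an alternative is to peel $e_c$ last. The $a$ and $b$ pairs would then compare points that differ by $b+c+e_c/2$-type shifts after symmetrization. I would only pursue this if the cruder domination step were deemed unsatisfying. The main thing to verify is that all segments lie in the convex region, which the hypothesis grants.
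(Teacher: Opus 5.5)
Your algebra is correct, and your route differs from the paper's. The paper writes each of the three differences in $E_g$ as an integral along its own straight segment, $x+s(e_a+e_b+e_c)$, $z+a+se_a$, $z+b+se_b$, with a \emph{common} parameter $s$. It regroups the integrand as $(J_{11}-J_{10})e_a+(J_{11}-J_{01})e_b+J_{11}e_c$ and bounds the Jacobian differences by the separations $b+c+s(e_b+e_c)$ and $a+c+s(e_a+e_c)$; the halves in the statement come from $\int_0^1 s\,ds$. You instead telescope through intermediate points and pair each increment with an axis difference. This yields the sharper bound $L_1\epsilon_c+L_2\epsilon_a\norm{b+c}+L_2\epsilon_b\norm{a+c}+L_2\epsilon_a\epsilon_b$, with no $\epsilon_a\epsilon_c$ or $\epsilon_b\epsilon_c$ terms. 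A single ordering already gives exactly this expression, which equals your symmetric form term by term. So the swap-and-average step you call the ``main obstacle'' is unnecessary.

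The step that does not follow from the stated hypothesis is ``convexity of the region keeps every segment used here inside it.'' You evaluate $Dg$ on $x+te_a$, $x+e_a+te_b$, and $x+e_a+e_b+te_c$, and on the analogous points in the swapped ordering. None of these lie on the statement's evaluation and perturbation paths, and in general they lie outside the convex hull of the six endpoints. For example, take $z=0$, $e_c=0$, and $e_b=-e_a$ orthogonal to $a$, $b$, and $c$. Then $x+e_a$ has $e_a$-component $\norm{e_a}$. Among the endpoints only $a+e_a$ attains this, so $x+e_a$ lies in their hull only if $b+c=0$. Hence $L_1$ and $L_2$ are not known to hold where you use them.

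There are two fixes:
\begin{enumerate}
\item Enlarge the hypothesis to cover your broken-line paths $x\to x+e_a\to x+e_a+e_b\to x+e_a+e_b+e_c$. This is harmless for small errors, but it is a different assumption, and for a SwiGLU map the constants depend on the chosen region.
\item Adopt the paper's synchronized parametrization, which stays on the stated paths.
\end{enumerate}
The $\epsilon_c$ cross terms in the statement are exactly what staying on the straight joint path costs. Your approach removes them by assuming more about the region.
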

The proof is given in Appendix~\ref{app:proof}. The bound shows that axis error is multiplied by variation along the other intervention, whereas mixed-component error enters directly.

\begin{corollary}[Small-amplitude order]
If $\norm{a},\norm{b}=O(t)$, $\norm{c}=O(t^2)$, $\epsilon_a,\epsilon_b=O(ut)$, and $\epsilon_c=O(ut^2)$, with bounded $L_1,L_2$, then $\norm{E_g}=O(ut^2+u^2t^2)$.
\end{corollary}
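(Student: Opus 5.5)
The corollary follows by direct substitution into the bound of Proposition~2, which we take as given. We bound each of its three summands under the stated scalings. Throughout, $u$ is a unit roundoff with $u\le 1$ and $t$ is a small amplitude parameter, so $t\le 1$ as well. The constants $L_1,L_2$ are assumed uniformly bounded over the convex region, which must contain all evaluation and perturbation paths for every $t$ in the range considered.

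First term: $L_1\epsilon_c = O(ut^2)$ directly from $\epsilon_c=O(ut^2)$.

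Second term: by the triangle inequality, $\norm{b+c}\le\norm{b}+\norm{c}=O(t)+O(t^2)=O(t)$ for $t\le1$. Also $(\epsilon_b+\epsilon_c)/2 = O(ut)+O(ut^2)=O(ut)$. Multiplying by $L_2\epsilon_a=O(ut)$ gives
\begin{equation*}
 L_2\epsilon_a\Bigl(\norm{b+c}+\tfrac{\epsilon_b+\epsilon_c}{2}\Bigr)=O(ut)\bigl(O(t)+O(ut)\bigr)=O(ut^2)+O(u^2t^2).
\end{equation*}
The third term is symmetric under $a\leftrightarrow b$ and gives the same order.

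Summing the three contributions yields $\norm{E_g}=O(ut^2+u^2t^2)$. Since $u\le1$, this is $O(ut^2)$, although the statement keeps the $u^2t^2$ term explicit.

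The main point needing care is not the arithmetic but the uniformity of the hypotheses of Proposition~2. The region must stay inside a fixed compact convex set as $t\to0$ so that $L_1$ and $L_2$ do not depend on $t$. This holds because all arguments lie within $O(t)$ of the native point $z$. We also need the $O(\cdot)$ constants for $a,b,c,\epsilon$ to be uniform, which we take as part of the hypothesis.

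It is worth remarking why the axis error does not enter at first order. Axis error contributes only through the $L_2$ terms, where it is multiplied by the other intervention's size $O(t)$. The mixed error $\epsilon_c$ enters at first order but is itself $O(ut^2)$. Both therefore land at $O(ut^2)$, matching the interaction's own $O(t^2)$ scale up to the factor $u$.
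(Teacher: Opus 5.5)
Your proposal is correct and matches the paper's argument: the paper also obtains the corollary by substituting the stated scalings into the bound of Proposition~2, implicitly using $t,u\le 1$ to absorb the lower-order terms as you do. Your remarks on uniformity are consistent with the corollary's explicit assumption of bounded $L_1,L_2$ on a region containing all evaluation and perturbation paths.
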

Here $u$ is a local error scale of the low-precision projection; it depends on dimension and conditioning. Independent candidate rounding can instead introduce an $O(ut)$ contribution to the interaction error. 

\subsection{Formation, reconstruction, and trajectory error}
The cancellation identity is local. Let $\widetilde C_{ij}$ be the mixed component actually formed in finite precision, let $q^C_{ij}=\B_{p_C}(W,\widetilde C_{ij})-W\widetilde C_{ij}$, and let $r_{ij}$ be the roundoff of the reconstruction sum relative to the exact sum of the stored contributions. If the native and axis contributions are identical wherever reused, then, with $\Dmix$ evaluated in real arithmetic,
\begin{equation}
 \Dmix(\widehat Z-WH)_{ij}
 =q^C_{ij}+W(\widetilde C_{ij}-C_{ij})+\Dmix r_{ij}.
 \label{eq:finite-error}
\end{equation}
Evaluating the four-point difference in floating point adds its own error $\rho^{\Delta}_{ij}$. Comparison with the original-model trajectory adds a further term,
\begin{equation}
 \Dmix(\widehat Z-WH^\star)_{ij}
 =\Dmix(\widehat Z-WH)_{ij}+W\Dmix(H-H^\star)_{ij}.
 \label{eq:trajectory-error}
\end{equation}
Rounding errors made while forming $A$ and $B$ are part of the shared axis contributions, so they cancel in the linear four-point difference. Their effects on nonlinear propagation are separate from this cancellation.

FP32 component subtraction can introduce an absolute floor on the order of $u_{32}\norm{H_{00}}$, up to dimension- and operation-dependent constants. Final reconstruction, nonlinear evaluation, and output differencing add their own floors. Native error cancels in the linear identity but not, in general, after a nonlinearity, which is why Proposition~2 assumes an exact native projection. A small-amplitude law for the whole VLM would need assumptions on how errors propagate through it, which this local result does not provide.

Large $C_{ij}$, high curvature, near-singular normalization inputs, and accumulated candidate-specific attention errors can all defeat the intended advantage. The reference execution is itself finite-precision, and a repeatable result can still be systematically inaccurate~\cite{baligar2026}. Agreement between replays therefore does not estimate the precision floor, and ranking interactions near that floor requires a higher-precision reference or an independently verified error bound.

\section{Experiments}
\label{sec:experiments}
\subsection{Setup}
\label{sec:setup}
We evaluate Qwen3-VL-4B-Instruct~\cite{qwen2025}, which has hidden width 2560 and MLP width 9728. Donor differences are inserted on their supports at the output of decoder 12, and the approximations act on the gate and up projections of decoder 13 (zero-based). The rest of each of the 64 candidates runs as in the original model, including attention with candidate-specific KV histories, the full down projection, and the residual stream. Intervention strengths are $0,1/7,\ldots,1$.

\paragraph{Inputs.}
The synthetic set has 16 families (indices 208--223). Each image shows four labeled colored panels on a gray checkerboard, and the question asks ``What color is panel \emph{L}? Answer with one color word.'' Donor A recolors the queried panel, and donor B, built separately from the same native image, recolors the next labeled panel. Their decoder-12 differences are restricted to disjoint 25-token geometric supports. All inputs have 218 tokens. Depending on where the edited panels lie, the full-branch frontier falls at token 40, 117, or 124, so FIL acts on the last 178, 101, or 94 tokens. Appendix~\ref{app:synthetic} gives the renderer, RGB values, donors, questions, and supports.

The first natural-image study uses four COCO val2014 images~\cite{lin2014} (indices 0, 16, 32, and 48) with two object-color questions each. The images come from a four-MLP baseline study of existing precision methods (Appendix~\ref{sec:natural-baseline}). Each donor blends one object's mask halfway toward a fixed palette color, red for A and blue for B on all four images. The unit of analysis is the image, and its two questions are averaged before pooling. A fifth image (index 61) is excluded from the averages and used only to recheck the one gate failure of the baseline study (Section~\ref{sec:natural-input}). Appendix~\ref{app:natural} specifies images, masks, question wording, and supports. A second natural-image study, the region-pair study of Section~\ref{sec:region-pairs}, uses four other images with one question and six region pairs each.

\paragraph{Arithmetic and reference.}
The reference is ordinary full-branch FP32 execution of the checkpoint. \emph{History dense} runs the exact-history executor of Section~\ref{sec:method} with all projections in FP32, so its distance from the reference measures the numerical effect of the execution schedule alone. In the approximate methods, the selected gate/up products use BF16 operands with FP32 accumulation and output, with TF32 and reduced-precision BF16 reductions disabled. The gate and up weights of this checkpoint are exactly representable in BF16, so every BF3 implementation skips the zero weight-residual product. This leaves 77 nominal row-products per token and projection for FIL and axes-only BF3 and 126 for full BF3. Every BF16 cast is audited for nonzero values that round to zero, subnormal values, and nonfinite values. An unsafe cast or a nonfinite low-precision product triggers FP32 recomputation of the affected projection, and a nonfinite source or formed operand stops the run. The audit results can be read back from the GPU check by check (\emph{original}) or in one batch (\emph{batched}); both give identical outputs (Appendix~\ref{sec:audit-batching}). Experiments run on an NVIDIA GH200 120GB with PyTorch 2.11.0+cu130, and Appendix~\ref{app:tokens} lists the model, tokenizer, and backend versions. All errors below are distances to the FP32 reference.

\paragraph{Timing and aggregation.}
Each method is timed for seven repetitions after two warmup runs. We take the median within each family and report the mean of these medians; for natural images, the two question medians are first averaged within each image. Synthetic timing uses families 208, 210, 212, and 214, which the implementation studies also reuse. Source acquisition is measured once per family or natural-image case and added to every repetition of every method, and all other query work, including synchronization and fallbacks, is timed directly. Times are compared only within a run. The storage-reuse study has within-family interquartile ranges of 0.24--11.78\,ms (Appendix~\ref{sec:storage-reuse}). Time differences below are within-run observations.

All main single-MLP comparisons pass two fixed fidelity gates, a maximum probability error of at most 0.01 and an endpoint MSE of at most $10^{-4}$, and no arithmetic fallback was recorded. Because the gates do not separate the methods, we compare them with the finer interaction losses of Section~\ref{sec:problem}. All inputs in this section were also used during method development, and there is no held-out test set. The bootstrap intervals, which resample families or image clusters, are therefore descriptive.

\subsection{Synthetic families}
\label{sec:synthetic-main}
On the 16 synthetic families, FIL lowers the mean final interaction MSE from $5.02\times10^{-10}$ to $2.81\times10^{-11}$ relative to axes-only BF3 and improves every family (Table~\ref{tab:synthetic-main}, Figure~\ref{fig:synthetic-main_cost}). This is a 94.4\% (17.9-fold) reduction in MSE and a 4.23-fold reduction in pooled RMS. The paired mean difference is $-4.74\times10^{-10}$, with a 95\% bootstrap interval of $[-8.81\times10^{-10},-1.56\times10^{-10}]$. The error also falls on the 84 joint--joint edges, from $1.07\times10^{-9}$ to $5.58\times10^{-11}$, and in the conditional four-color interaction MSE, from $6.24\times10^{-10}$ to $3.52\times10^{-11}$.

Neither factoring nor higher axis precision accounts for the gain alone. Factored BF1, which factors and shares the axes without raising their precision, improves on delta BF1 in 11 of 16 families but raises mean MSE by 21.9\%, with a paired interval spanning zero. Higher axis precision without factoring is the axes-only control itself. In this comparison, the mean gain appears only when factoring with shared axis contributions is combined with higher axis precision.

\begin{table}[t]
\centering\small
\caption{Synthetic single-MLP evaluation: mean interaction MSE over 16 families, and mean of seven-repeat medians over four timed families relative to history dense (1671.2\,ms). Delta BF1 ($14+49$) and delta BF1 (63 packed) are two packings of the same computation; the BF3 rows use the optimized implementations.}
\label{tab:synthetic-main}
\begin{tabular}{lrr}\toprule
Method & Mean interaction MSE & Time vs.\ history dense (ms)\\\midrule
History dense & $1.77\times10^{-12}$ & 0.0\\
Delta BF1 ($14+49$) & $7.97\times10^{-10}$ & $-4.0$\\
Axes-only BF3 & $5.02\times10^{-10}$ & $-2.5$\\
Factored BF1 & $9.72\times10^{-10}$ & $-1.9$\\
FIL (factored axes BF3) & $2.81\times10^{-11}$ & $-1.0$\\
Delta BF3 & $3.53\times10^{-12}$ & $-2.3$\\
Plain BF3 & $2.13\times10^{-12}$ & $-2.1$\\
Delta BF1 (63 packed) & $7.97\times10^{-10}$ & $-6.4$\\\bottomrule
\end{tabular}
\end{table}

\begin{figure}[!htbp]
\centering
\includegraphics[width=\linewidth]{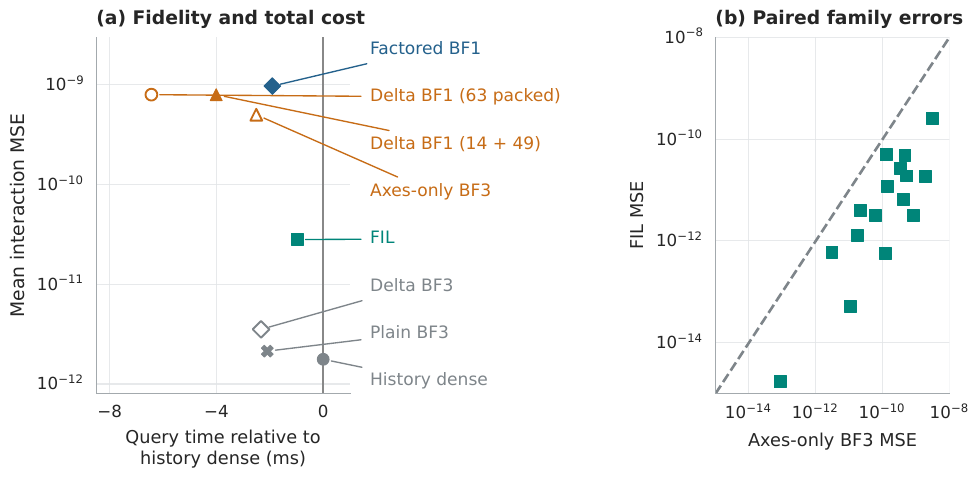}
\caption{Synthetic evaluation. (a) Mean MSE over 16 families versus mean query time over four family medians. Time is relative to history dense (1671.2\,ms); negative values indicate lower total query time. (b) Paired family losses on equal logarithmic scales. The dashed diagonal marks parity.}
\label{fig:synthetic-main_cost}
\end{figure}

FIL adds 1.5\,ms relative to axes-only BF3 on a roughly 1.67\,s query. Compared with delta and plain BF3, FIL has 7.95 and 13.16 times their MSE and takes 1.4 and 1.1\,ms longer. Section~\ref{sec:cost} compares implementation variants.

\subsection{Natural images}
\label{sec:natural-input}
On the four natural images, FIL lowers the image-mean interaction MSE by 99.85\% relative to axes-only BF3 and improves every image and every question (Table~\ref{tab:natural-input}). On the 84 joint--joint edges, the error falls by 99.84\%, from $1.46\times10^{-8}$ to $2.40\times10^{-11}$.

\begin{table}[!htbp]
\centering\small
\caption{Natural-image evaluation: four image means, each averaging two questions. Time averages seven-repeat case medians within image, then across images, and is given relative to history dense (1405.4\,ms). The BF3-based methods use batched audit read-back; Table~\ref{tab:natural-original-audit} lists both read-back variants. Image 61 is not included.}
\label{tab:natural-input}
\begin{tabular}{lrr}\toprule
Method & Mean interaction MSE & Time vs.\ history dense (ms)\\\midrule
History dense & $6.96\times10^{-12}$ & 0.0\\
Delta BF1 & $8.34\times10^{-9}$ & $-7.0$\\
Axes-only BF3 & $7.72\times10^{-9}$ & $-6.1$\\
FIL & $1.16\times10^{-11}$ & $-5.3$\\
Delta BF3 & $7.71\times10^{-12}$ & $-4.9$\\
Plain BF3 & $1.08\times10^{-11}$ & $-5.4$\\
\bottomrule\end{tabular}
\end{table}

Most of the absolute gain comes from one image. Image 48 accounts for 99.59\% of the total decrease in MSE, although the reduction over the other three images is still 96.77\% (Table~\ref{tab:natural-input_images}). Image 16 has errors of order $10^{-16}$ under both methods, so its relative reduction of 41.6\% is a negligible absolute change.

\begin{table}[!htbp]
\centering\footnotesize
\caption{Image-level interaction MSE, FIL versus axes-only BF3 with batched auditing. Share is the fraction of the total absolute MSE reduction; Omit is the relative reduction over the other three images.}
\label{tab:natural-input_images}
\begin{tabular}{rrrrrr}\toprule
Image & Axes-only MSE & FIL MSE & Reduction (\%) & Share (\%) & Omit: reduction (\%)\\\midrule
0 & $8.11\times10^{-11}$ & $3.50\times10^{-12}$ & 95.69 & 0.252 & 99.86\\
16 & $4.90\times10^{-16}$ & $2.87\times10^{-16}$ & 41.56 & $6.61\times10^{-7}$ & 99.85\\
32 & $4.94\times10^{-11}$ & $7.20\times10^{-13}$ & 98.54 & 0.158 & 99.85\\
48 & $3.08\times10^{-8}$ & $4.23\times10^{-11}$ & 99.86 & 99.59 & 96.77\\
\bottomrule\end{tabular}
\end{table}

\begin{figure}[!htbp]
\centering
\includegraphics[width=\linewidth]{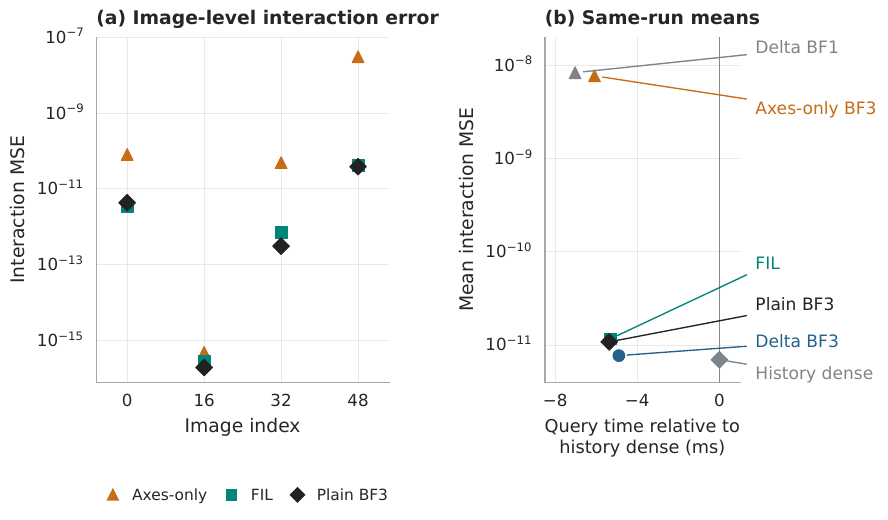}
\caption{Natural-image evaluation with batched auditing. (a) MSE for each image, averaged over two questions. (b) Mean error and query time relative to history dense (1405.4\,ms); negative times indicate lower total cost. Original-audit variants are listed in Table~\ref{tab:natural-original-audit}.}
\label{fig:natural-input}
\end{figure}

FIL has 1.075 times the MSE of plain BF3 and 1.51 times that of delta BF3. Its paired time difference from plain BF3 is $+0.05$\,ms per roughly 1.4\,s query, with FIL faster on two images and a 95\% interval of $[-0.72,+0.80]$\,ms. Against delta BF3, the observed saving is 0.42\,ms.

In the four-MLP baseline study, delta BF1 failed the 0.01 gate on one case (case 123, on image 61) with a maximum error of 0.0132 (Appendix~\ref{sec:natural-baseline}). With only one MLP approximated, the maximum error on this case is 0.00836 under delta BF1 and $3.02\times10^{-5}$ under FIL, both within the gate. Because the baseline study approximated four MLPs, this result does not show that FIL would remove the original failure.

\subsection{Region pairs with identical axis outputs}
\label{sec:region-pairs}
The region-pair study uses four other COCO val2014 images, each with one object-color question and four donor regions. Each of the six unordered region pairs receives the full $8\times8$ amplitude grid. To give all pairs of an image the same precision range, the gate/up approximation in decoder 13 begins at $F=\max_p f_p$, where $f_p$ is the full-branch frontier of pair $p$, and computation before $F$ stays dense. Each pair still keeps its own history and nonlinear continuation. Appendix~\ref{app:region-inputs} specifies the stimuli and execution.

\paragraph{A control with identical native and axis outputs.}
In all 24 pairs, FIL and axes-only BF3 produce bitwise-identical five-group logits at the native state and at all 14 single-intervention states. In real arithmetic, the difference between their interactions is therefore the difference between their joint outputs,
\begin{equation}
 I^{\rm FIL}_{ij}-I^{\rm axes}_{ij}
 =y^{\rm FIL}_{ij}-y^{\rm axes}_{ij},\qquad i,j>0.
 \label{eq:matched-axes}
\end{equation}
The fidelity difference is therefore confined to the joint candidates. The methods also use the same multiply--accumulate counts and packed projection shapes (Appendix~\ref{app:region-inputs}). Their joint operands, reconstruction, and rounding coupling change together.

\paragraph{Fidelity and total bundle cost.}
Losses are averaged over six pairs within each image and then equally across the four images. A timed bundle evaluates all six pairs. After two warmups, we take each image's seven-repeat median and average these medians. Source acquisition (one native and four donor forward passes) and common-frontier planning are measured once per image and added once per bundle. Table~\ref{tab:region-summary} gives differences from the 7.75\,s history-dense bundle, rather than the single-family costs of Table~\ref{tab:natural-input}.

\begin{table}[!htbp]
\centering\small
\caption{Region-pair study: four images, six pairs per image, one MLP. Approximate methods use the common per-image precision range. MSE is an image mean of pair means. Time is relative to history dense (7749.9\,ms), using the mean of seven-repeat image medians for six-pair bundles. Time differences are calculated before rounding. Max error is over all states and five groups.}
\label{tab:region-summary}
\begin{tabular}{lrrr}\toprule
Method & Interaction MSE & Max probability error & Time vs.\ history dense (ms)\\\midrule
History dense & $2.99\times10^{-12}$ & $2.26\times10^{-5}$ & 0.0\\
Delta BF1 & $5.37\times10^{-9}$ & $1.24\times10^{-3}$ & $-14.9$\\
Axes-only BF3 & $3.97\times10^{-9}$ & $1.24\times10^{-3}$ & $-7.3$\\
FIL & $2.57\times10^{-11}$ & $1.72\times10^{-4}$ & $-3.5$\\
Delta BF3 & $3.40\times10^{-12}$ & $2.40\times10^{-5}$ & $-2.0$\\
Plain BF3 & $2.89\times10^{-12}$ & $2.39\times10^{-5}$ & $-6.0$\\
Ordinary exact & Reference & --- & $+2675.6$\\\bottomrule
\end{tabular}
\end{table}

FIL lowers the interaction MSE by 99.35\% relative to axes-only BF3 and improves all four images and all 24 pairs, while taking 3.7\,ms longer per bundle of about 7.7\,s. As in the first natural-image study, one image dominates the absolute numbers. Image 0 (COCO 529850) contributes 94.53\% of the absolute reduction, and its region pair $(0,1)$ alone accounts for 80.05\% of the MSE that remains under FIL. Without image 0, the reduction over the other three images is 99.58\%. The size of the error also depends on whether a pair contains the queried region: the mean FIL MSE is $5.14\times10^{-11}$ over the 12 pairs that do and $2.63\times10^{-18}$ over the 12 pairs with context regions only.

\begin{table}[!htbp]
\centering\small
\caption{Region-pair study image means, each over six pairs. These indices refer to a different inventory from the two-question study in Table~\ref{tab:natural_inputs}; COCO IDs identify the images. Share is the fraction of total axes-only-minus-FIL MSE reduction.}
\label{tab:region-images}
\begin{tabular}{rrrrrr}\toprule
Index & COCO ID & Axes-only MSE & FIL MSE & Plain BF3 MSE & Share (\%)\\\midrule
0 & 529850 & $1.50\times10^{-8}$ & $9.92\times10^{-11}$ & $6.20\times10^{-12}$ & 94.53\\
3 & 302789 & $7.48\times10^{-10}$ & $2.87\times10^{-12}$ & $4.20\times10^{-12}$ & 4.72\\
13 & 315577 & $1.20\times10^{-10}$ & $7.42\times10^{-13}$ & $1.17\times10^{-12}$ & 0.75\\
22 & 221911 & $6.37\times10^{-16}$ & $1.87\times10^{-17}$ & $2.18\times10^{-17}$ & $3.92\times10^{-6}$\\
\bottomrule\end{tabular}
\end{table}

FIL has 8.89 times the MSE of plain BF3 and 7.57 times that of delta BF3, although it has lower MSE than plain BF3 on three of the four images; image 0 decides the aggregate ordering (Figure~\ref{fig:region-pairs}). The mean FIL bundle is 2.5\,ms slower than plain BF3 and 1.5\,ms faster than delta BF3. Both differences are small next to the spread of repeated FIL bundles on a single image, which is 18--79\,ms depending on the image (Appendix~\ref{app:region-inputs}).

\begin{figure}[!htbp]
\centering
\includegraphics[width=\linewidth]{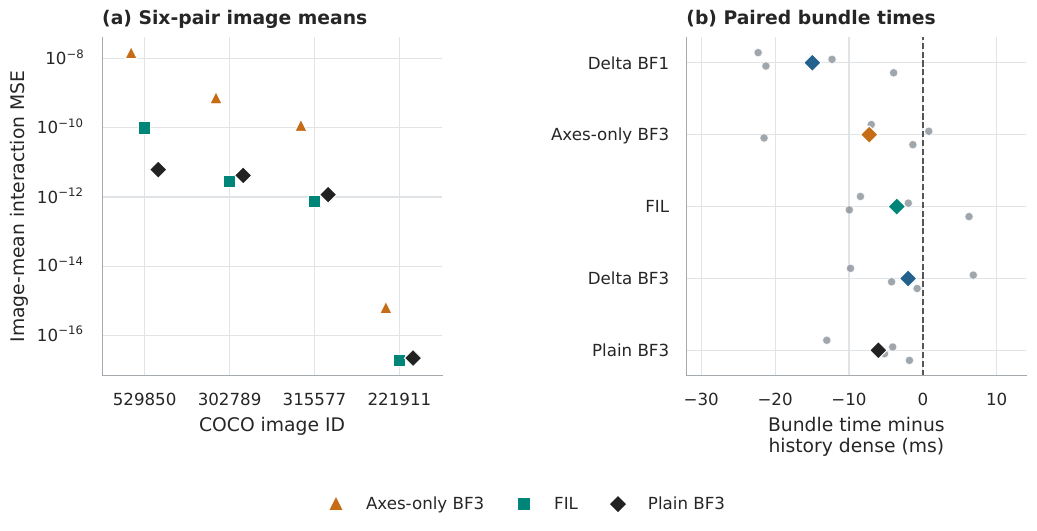}
\caption{Region-pair study at a common precision range. (a) Image-mean interaction MSE over six pairs, labeled by COCO ID. (b) Each dot is an image's seven-repeat median time minus its same-run history-dense median; diamonds are means of four paired differences. Mean history-dense bundle time is 7.75\,s.}
\label{fig:region-pairs}
\end{figure}

\paragraph{Reuse across pairs.}
Sharing an axis across different region pairs requires a separate equivalence check. With the common precision range, separately executed copies of the same native or single-region state differ after normalization in 91 of 116 groups, by at most $1.79\times10^{-6}$ (Appendix~\ref{app:region-inputs}). These differences precede the selected MLP. Proposition~1 instead concerns reuse of a stored axis projection within one family. Cross-pair reuse would need a common execution path and its own fidelity evaluation.

\subsection{Propagation to the final output}
\label{sec:ablations}
Against FP32 local references computed at the same reached inputs, the ratio of FIL to axes-only interaction MSE is 0.0260 at the gate output, 0.0280 at the up output, 0.0262 after SwiGLU, and 0.0265 after the full MLP, and all 16 families improve at every stage. At the final output the ratio is 0.0560 (Figure~\ref{fig:propagation}). The rest of the network thus roughly halves the local advantage, from about 38-fold to about 18-fold, but does not remove it.

To describe the final error, let $e_{ij,k}=\widehat y_{ij,k}-y^\star_{ij,k}$ and define
\begin{equation}
 v_{ij,k}=(e_{ij,k},e_{i0,k},e_{0j,k},e_{00,k})^\top,
 \quad G=\frac{1}{49K}\sum_{i,j=1}^7\sum_{k=1}^K v_{ij,k}v_{ij,k}^\top.
 \label{eq:moment}
\end{equation}
For $s=(1,-1,-1,1)^\top$,
\begin{equation}
 L_{\rm int}=s^\top Gs
 =\operatorname{tr}(G)+\sum_{a\ne b}s_as_bG_{ab}.
 \label{eq:moment-split}
\end{equation}
$G$ is an uncentered second moment. Its diagonal weights states by their occurrence in anchored tuples, whereas endpoint MSE weights the 64 states equally.

\begin{table}[t]
\centering\small
\caption{Final-output uncentered error moments, averaged over 16 synthetic families. The cross sum includes the contrast signs from Eq.~\eqref{eq:moment-split}.}
\label{tab:moments}
\begin{tabular}{lrrr}\toprule
Method & Diagonal sum & Signed cross sum & Interaction MSE\\\midrule
Axes-only BF3 & $5.09\times10^{-10}$ & $-7.43\times10^{-12}$ & $5.02\times10^{-10}$\\
FIL & $2.89\times10^{-11}$ & $-8.03\times10^{-13}$ & $2.81\times10^{-11}$\\\bottomrule
\end{tabular}
\end{table}

The diagonal sum falls by $4.80\times10^{-10}$, which is 101.4\% of the net reduction, and the change in the cross terms offsets 1.4\%. At the final output, the gain therefore appears as smaller errors in the individual states rather than as stronger cancellation between them. Proposition~1 concerns the projection errors before the nonlinear suffix, so the two observations are compatible. Because Eq.~\eqref{eq:moment-split} is an accounting identity, it cannot tell whether the smaller state errors come from operand scale, shared rounding, or caching.

\begin{figure}[!htbp]
\centering
\includegraphics[width=\linewidth]{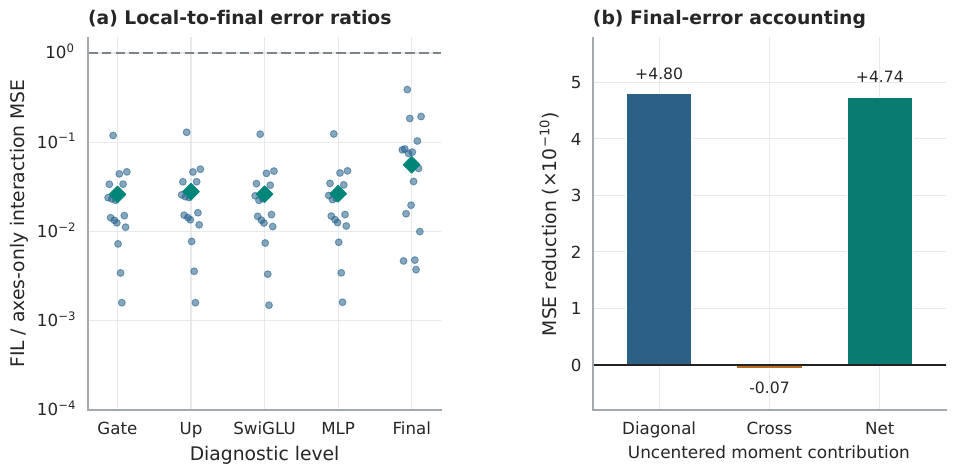}
\caption{Error propagation on 16 synthetic families. (a) Each point is a family and each diamond is the ratio of mean losses, both against FP32 references; the dashed line marks parity. (b) Axes-only minus FIL in the diagonal, signed cross, and total terms of Eq.~\eqref{eq:moment-split}.}
\label{fig:propagation}
\end{figure}

\subsection{Precision allocation and caching}
\label{sec:allocation}
On the four-family subset, moving the BF3 budget from the 14 axes to 14 mixed components raises MSE by factors of 33.17 for fixed positions and 36.23 for norm-based selection (Table~\ref{tab:ablation}, Figure~\ref{fig:ablation}). Both variants take longer. Assigning BF3 to every factored component or keeping one projection dense improves fidelity at higher cost.

Recomputing the axis projections with the same shapes gives identical logits and adds 5.7\,ms. Caching saves this computation, while both implementations preserve identical numerical axis contributions.

\begin{table}[t]
\centering\small
\caption{Ablations on four synthetic families, relative to FIL measured in the same run (MSE $8.03\times10^{-11}$). The two mixed-14 controls keep the nominal BF3 budget; the final three rows change it or the set of approximated projections.}
\label{tab:ablation}
\begin{tabular}{lrrr}\toprule
Variant & Mean interaction MSE & MSE / FIL & Time $-$ FIL (ms)\\\midrule
FIL, cached & $8.03\times10^{-11}$ & 1.000 & 0.0\\
FIL, same-shape axis recomputation & $8.03\times10^{-11}$ & 1.000 & $+5.7$\\
BF3 on fixed mixed 14 & $2.66\times10^{-9}$ & 33.17 & $+1.4$\\
BF3 on norm-selected mixed 14 & $2.91\times10^{-9}$ & 36.23 & $+2.5$\\\midrule
Factored all BF3 & $1.33\times10^{-12}$ & 0.0165 & $+3.5$\\
FIL at gate only, up dense & $2.93\times10^{-11}$ & 0.365 & $+5.3$\\
FIL at up only, gate dense & $4.59\times10^{-11}$ & 0.571 & $+4.9$\\\bottomrule
\end{tabular}
\end{table}

\begin{figure}[!htbp]
\centering
\includegraphics[width=\linewidth]{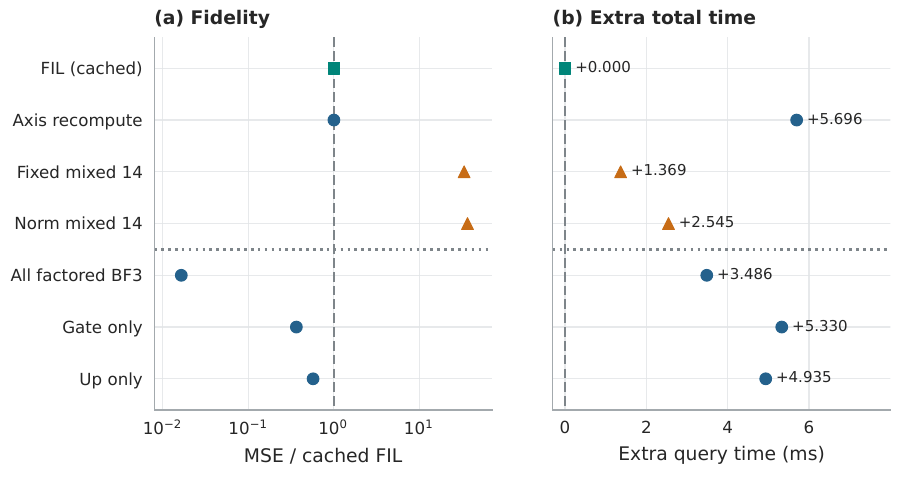}
\caption{Precision and caching ablations on four families. Both panels use the same-run FIL baseline in Table~\ref{tab:ablation}. Rows below the divider change the precision or projection budget. Time differences average within-family seven-repeat medians.}
\label{fig:ablation}
\end{figure}

At ordinary amplitudes, component formation and reconstruction contribute little to the local projection error. Across four families, both projections, three subtraction orders, and two precisions for each operation, the ratios of their squared errors to the total local interaction error are about $10^{-8}$ and $10^{-6}$, respectively. Appendix~\ref{app:formation} defines this reference and gives the ratios. The CPU experiment in Appendix~\ref{app:reproduce} examines the small-amplitude regime, where a numerical floor becomes visible.

\subsection{Fidelity and total query time}
\label{sec:cost}
FIL uses 77 nominal row-products per token and projection against 126 for full BF3. With temporary-storage reuse on the 16-family set, it retains the 94.4\% MSE reduction over axes-only BF3 at an additional 1.8\,ms. Its MSE is 13.16 times that of plain BF3, which is 1.0\,ms faster. Appendix~\ref{sec:storage-reuse} gives the comparison and the sensitivity to timing aggregation.

The four-family implementation comparison combines index reuse and fused reconstruction with two audit read-back strategies (Table~\ref{tab:audit-batching}). Batched FIL has 90.37\% lower MSE than batched axes-only BF3 and adds 0.58\,ms. Relative to the faster plain BF3 variant, it saves 0.67\,ms with 41.55 times the MSE. The difference between 90.37\% and 94.4\% comes from averaging over four rather than 16 families; on the four common families, the accuracy metrics match those of the 16-family run. At equal accuracy, original-audit FIL is 0.08\,ms faster than batched FIL in this run.

\begin{table}[!htbp]
\centering\small
\caption{Audit comparison on four synthetic families: mean interaction MSE and mean of within-family seven-repeat median query times, relative to history dense (1636.3\,ms). Both audit variants are measured in the same run.}
\label{tab:audit-batching}
\begin{tabular}{lrr}\toprule
Method & Mean interaction MSE & Time vs.\ history dense (ms)\\\midrule
History dense & $1.50\times10^{-12}$ & 0.0\\
Delta BF1 & $9.69\times10^{-10}$ & $-7.2$\\
Axes-only BF3, original audit & $8.34\times10^{-10}$ & $-5.1$\\
FIL, original audit & $8.03\times10^{-11}$ & $-5.0$\\
Delta BF3, original audit & $1.48\times10^{-12}$ & $-3.2$\\
Plain BF3, original audit & $1.93\times10^{-12}$ & $-4.2$\\
Axes-only BF3, batched audit & $8.34\times10^{-10}$ & $-5.5$\\
FIL, batched audit & $8.03\times10^{-11}$ & $-4.9$\\
Delta BF3, batched audit & $1.48\times10^{-12}$ & $-4.0$\\
Plain BF3, batched audit & $1.93\times10^{-12}$ & $-4.1$\\
\bottomrule\end{tabular}
\end{table}

\begin{figure}[!htbp]
\centering
\includegraphics[width=\linewidth]{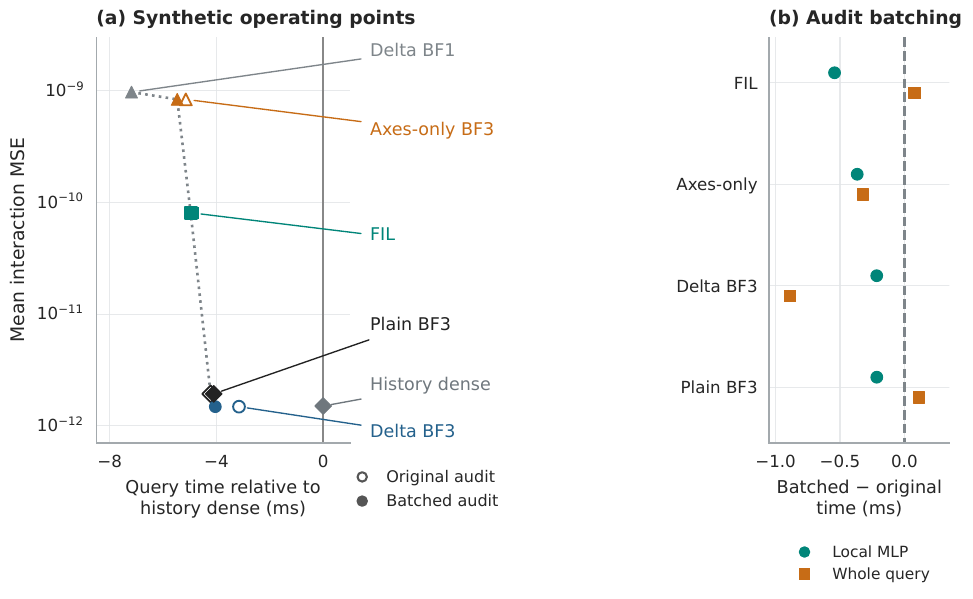}
\caption{Audit comparison on four synthetic families. (a) Same-run means from Table~\ref{tab:audit-batching}, with time relative to history dense (1636.3\,ms). Open and filled markers distinguish original and batched auditing; one label identifies each method. Dotted segments connect observed nondominated points. (b) Batched minus original audit time, measured separately for the local MLP and whole query.}
\label{fig:audit-batching_cost}
\end{figure}

Batching halves FIL's synchronizations from 24 to 12 and saves 0.54\,ms inside the MLP, while the whole-query time increases by 0.08\,ms. The measurements do not identify what offsets the local saving (Appendix~\ref{sec:audit-batching}). Across the synthetic and natural studies, lower nominal projection cost has not produced a consistent query-time advantage over full BF3.

\subsection{Candidate selection}
\label{sec:selection}
We next ask whether the smaller error changes a decision made from the interactions. As a post hoc test, we select, for each of the 16 synthetic families and five output groups, the amplitude pair with the largest absolute interaction among the 49, breaking ties in row-major order. We do this for all 13 methods of the storage-reuse run (Appendix~\ref{sec:storage-reuse}). Every method, including delta BF1, selects the same pair as the reference in all 80 family--group comparisons, with top-5 recall of 1 and zero regret.

To assess the separation of candidate scores, define $\varepsilon=\max_c|\widehat I_c-I_c^\star|$ and let $g$ be the gap between the largest and second-largest reference scores $|I_c^\star|$. Since $\bigl||\widehat I_c|-|I_c^\star|\bigr|\leq\varepsilon$, when all candidates have been evaluated,
\begin{equation}
 g>2\varepsilon\quad\Longrightarrow\quad
 \arg\max_c|\widehat I_c|=\arg\max_c|I_c^\star|.
 \label{eq:selection-gap}
\end{equation}
The condition is sufficient but not necessary.

\begin{table}[!htbp]
\centering\small
\caption{Post hoc selection margins on the 16 synthetic families and five output groups, relative to the FP32 reference, for three methods of the storage-reuse run (Appendix~\ref{sec:storage-reuse}). All 13 methods of that run, including the ten not listed, select correctly in all 80 comparisons.}
\label{tab:selection}
\begin{tabular}{lrr}\toprule
Method & Comparisons with $g>2\varepsilon$ & Observed top-1 agreement\\\midrule
Delta BF1 & 75/80 & 80/80\\
FIL & 79/80 & 80/80\\
Plain BF3 & 79/80 & 80/80\\\bottomrule
\end{tabular}
\end{table}

For delta BF1, the median of $g/(2\varepsilon)$ is 37.75, so most rankings are insensitive to its numerical error. The reference maxima also fall on 20 different amplitude pairs, with only 3 of the 80 at $(7,7)$, so the agreement does not come from one corner that always wins.
\paragraph{Region-pair selection.}
In the region-pair study, the decision is which region pair of an image interacts most strongly in the red probability of the five-group distribution. A pair's score is $\max_{i,j>0}|I_{ij,\mathrm{red}}|$ over its 49 amplitudes, and exact ties are broken by lexicographic region-pair order and then by row-major amplitude order. All 11 methods (five arithmetic methods, each with pair-specific and common precision ranges, plus history dense) choose the reference pair and its reference amplitude on all four images. Selecting directly among all $6\times49=294$ mixed points also has zero regret, and delta BF1 already satisfies $g>2\varepsilon$ on every image.

An offline coarse-to-fine simulation also has zero regret. It scores each pair on the $3\times3$ grid formed by amplitudes $\{2/7,4/7,1\}$ on each axis, retains the two highest-scoring pairs, and selects the winner from their full grids. All values come from previously evaluated surfaces, so this diagnostic supplies no timing for a screening-and-refinement execution.

\FloatBarrier
\section{Discussion and Limitations}
\label{sec:validation}
The comparison with the matched control shows a benefit from combining factored operands, identical axis contributions, and higher axis precision. Relative to the control, FIL also changes the BF1 joint operand from $A_i+B_j+C_{ij}$ to $C_{ij}$. A smaller operand can lower the absolute projection error, although $\norm{C_{ij}}$ need not be smaller than $\norm{A_i+B_j+C_{ij}}$ at every input. Separating error coupling from operand scale would require a comparison that changes the axis contributions used by the joints while keeping the component operands fixed. The second-moment decomposition in Section~\ref{sec:ablations} accounts for the aggregate error without identifying these causal contributions. Other anchors, such as the mean candidate input, and precision allocation by token position remain untested.

The evidence covers one approximated MLP in one architecture, development inputs, and five grouped next-token events. Transfer to independent images, other donor colors, multiple layers, and another model remains to be evaluated. Interactions close to the FP32 reference's error level also require a higher-precision reference for the whole nonlinear suffix.

Practical value depends on whether the extra fidelity changes a decision or reduces the cost of meeting a specified tolerance. Both selection tasks are saturated under BF1. A more discriminating evaluation should fix the output event, candidates, acceptable regret, and total compute budget before inspecting method-specific outcomes. Equally optimized full BF3, AtP*, and HVP-based estimators~\cite{kramar2024,zhangwang2026} are relevant comparisons. For a decision needing only four endpoints, the baseline should evaluate those four states rather than the full grid.

Cross-pair native and KV reuse has been implemented but remains unmeasured on the GH200 used here. This optimization is available to every arithmetic method and is separate from FIL's precision allocation.

\section{Conclusion}
FIL evaluates a finite intervention family through native, axis, and mixed components, projects the axes once at higher precision, and reuses them in every candidate that contains them. The local analysis shows which rounding errors then cancel and which remain. In one MLP of a pretrained VLM, FIL reduced the interaction error relative to an axes-only control with the same nominal budget on all 16 synthetic families and all eight main natural images in the two separate studies, including a region-pair comparison in which both methods have identical native and single-axis outputs. Full BF3 remained more accurate in aggregate at similar total query time. The next step is to find an intervention decision that needs this fidelity and to test whether FIL reaches it at lower total cost.

\section*{AI use statement}
GPT-6 Astra was used primarily for experimental implementation. It also supported refinement of hypotheses, formulations and experimental designs, assisted with mathematical proofs and helped review analyses and interpretations.
Further uses included literature search, procedural chart generation and screening, translation of author-written text, and editing and polishing the author-written manuscript.
The author originated the core research idea and initial conceptual and mathematical formulations, and led hypothesis development and the analysis and interpretation of results.

Claude provided editorial reviews and draft revisions, portions of which were incorporated into the manuscript.

Figures were produced programmatically from saved numerical results and chart specifications, without generative image synthesis.

Verification included code tests, replay of saved outputs, and checks against original references.
The author takes responsibility for the final manuscript and artifacts.

\clearpage
\begingroup
\small
\raggedright
\setlength{\bibsep}{2pt}
\bibliographystyle{unsrtnat}
\bibliography{references}
\endgroup

\clearpage
\appendix
\section{Proof of the Nonlinear Bound}
\label{app:proof}
For clarity, consider one pair of nonzero intervention levels. Write $x_{11}=z+a+b+c$, $x_{10}=z+a$, and $x_{01}=z+b$. Define
\begin{align*}
 J_{11}(s)&=Dg(x_{11}+s(e_a+e_b+e_c)),\\
 J_{10}(s)&=Dg(x_{10}+se_a),\\
 J_{01}(s)&=Dg(x_{01}+se_b).
\end{align*}
By the fundamental theorem of calculus, the mixed error is
\begin{align}
 E_g=\int_0^1\big[&(J_{11}(s)-J_{10}(s))e_a\nonumber\\
 &+(J_{11}(s)-J_{01}(s))e_b+J_{11}(s)e_c\big]\,ds.
\end{align}
The respective evaluation-point separations are $b+c+s(e_b+e_c)$ and $a+c+s(e_a+e_c)$. The Lipschitz-Jacobian assumption therefore gives
\begin{align*}
 \norm{J_{11}(s)-J_{10}(s)}&\le L_2(\norm{b+c}+s(\epsilon_b+\epsilon_c)),\\
 \norm{J_{11}(s)-J_{01}(s)}&\le L_2(\norm{a+c}+s(\epsilon_a+\epsilon_c)).
\end{align*}
Applying the operator-norm bounds, integrating $s$ over $[0,1]$, and using $\norm{J_{11}}\le L_1$ yields Eq.~\eqref{eq:bound}. Substitution of the stated amplitude assumptions proves the corollary. The result also applies to a local SwiGLU map by treating the gate/up pair as one vector, on a bounded region with finite derivative constants. It does not bound every downstream attention layer with the same constants.

\paragraph{Rounding model.}
For normal finite operands without range failure, a componentwise BF16 cast satisfies a relative error bound with unit roundoff $2^{-8}$. A local matrix-product error additionally depends on weight rounding and accumulation. Even with an exact BF16 weight, a bound must include accumulation, not just input casting. In the CPU probe, the linear diagnostic found no violation of its input-rounding bound in the tested cases; this is an empirical check, not a bound for arbitrary GEMM reduction orders.

The measured defect between the reconstructed mixed projection and its directly projected mixed component is $1.11\times10^{-16}$ in FP64 and $1.64\times10^{-7}$ in FP32 in the CPU diagnostic. These measured defects quantify reconstruction error in this CPU diagnostic.

\section{Implementation Requirements}
\label{app:contract}
The single-MLP experiments follow these requirements, which would also apply to a multilayer extension (not yet evaluated).
\begin{enumerate}
\item Bind each gate/up/down operation to the parameters of its own layer. Preserve the original positions, causal masks, normalization, attention, and residual topology.
\item Form $H_{00},H_{i0},H_{0j},H_{ij}$ at the same layer, token position, projection, and execution stage. Decompose after the relevant normalization.
\item Use identical numerical axis projections in corresponding candidates; caching avoids repeated work. Deterministic recomputation satisfies the same fidelity condition only when numerical identity is verified. Do not reset a joint candidate to an axis or reference state, and keep each joint candidate's own KV history once histories diverge.
\item Keep all 49 mixed residuals and full feature dimensions. Approximate only the selected gate/up projections, and only after the full-branch frontier.
\item Keep range checks, fallbacks, candidate-order tests, and replay checks against the native and full model. An accuracy failure is a result to report; a violated requirement invalidates the run.
\item Include source acquisition, history construction, native projections, casts, mixed subtraction, reconstruction, temporary memory, and any fallback in runtime accounting. Report the memory occupied by the precomputed BF16 high and residual copies of the gate/up weights separately, along with the time for preparing them and for first-use kernel compilation.
\end{enumerate}
When candidate microbatching changes, axis projections must still be available to every associated joint with compatible token geometry. Recomputing them with a different batch shape may change rounding. The same-shape recomputation control in Section~\ref{sec:allocation} verifies numerical identity and measures the cost of recomputation on GH200; it does not cover arbitrary shape changes or a multilayer schedule.

\section{Component-Formation Diagnostic}
\label{app:formation}
At each fixed reached input, the diagnostic compares the reconstructed projection's four-term difference with $WC^{(64)}$, where the mixed component and the original-weight product are evaluated in FP64. The denominator called direct SSE is $\|\Dmix\widehat Z-WC^{(64)}\|_F^2$, summed over the 49 mixed candidates, active tokens, and projection features, with the four-term difference evaluated in FP64. It is a local projection error, not the final VLM output error.

Four families, two projections, three subtraction orders, and two precisions each for formation and reconstruction give 96 configurations. Under sequential FP32 formation and reconstruction, the median formation-SSE/direct-SSE ratios are $1.71\times10^{-8}$ for gate and $1.60\times10^{-8}$ for up; the reconstruction ratios are $1.10\times10^{-6}$ and $4.36\times10^{-7}$. Formation measures $W(\widetilde C-C^{(64)})$, and reconstruction measures the reconstructed four-term difference minus the computed mixed projection. The diagnostic also separates the FP32 operand cast from the BF1 mixed-projection error. Signed cross terms prevent these ratios from being interpreted as additive shares of total SSE.

\section{Data, Aggregation, and Availability}
\label{app:provenance}
Tables and graphs are computed from saved prediction summaries, timing repetitions, and diagnostics. The VLM losses use five output groups, including OTHER; the CPU probe uses four softmax outputs. The reported aggregates include all evaluated families and timing repetitions. Image 61, which is used only to recheck the baseline failure case, is kept out of the natural-image averages by design.

Synthetic descriptive intervals use 50,000 paired-family bootstrap draws with seed 145015. Natural-image baseline intervals preserve the image cluster. In the two-question FIL study, intervals enumerate all 256 image-cluster resamples. These intervals describe the evaluated development sets and do not correct for repeated model-development decisions or multiple comparisons.

The analyses include rescoring of saved outputs and checks that implementation variants give identical predictions. The accompanying archive contains the LaTeX source and vector figures. Experimental code and raw result archives are not included in this release.

\section{Packing, Preparation, and Storage}
\label{sec:storage-reuse}
The storage comparison uses the same 16 synthetic families and single MLP, with timing on four families. Here ``packed'' denotes the common-operand implementation; ``storage'' adds temporary-storage reuse to that implementation. Both use the same arithmetic and skip the zero weight-residual product. Across the six storage/packed pairs, the saved logits agree for all 96 method--family comparisons. All 208 accuracy records pass the fixed five-group maximum-error $0.01$ and endpoint-MSE $10^{-4}$ gates, with no recorded arithmetic fallback. 

\begin{table}[!htbp]
\centering\small
\caption{Packing/storage comparison: 16-family accuracy and four-family timing. Times average seven-repeat family medians, including source cost. All methods use the audited zero-product optimization.}
\label{tab:storage-reuse}
\begin{tabular}{lrr}\toprule
Method & Mean interaction MSE & Primary query time (ms)\\\midrule
History dense & $1.77\times10^{-12}$ & 1634.862\\
Delta BF1 packed & $7.97\times10^{-10}$ & 1627.608\\
Delta BF1 storage & $7.97\times10^{-10}$ & 1627.512\\
Axes-only BF3 packed & $5.02\times10^{-10}$ & 1630.131\\
Axes-only BF3 storage & $5.02\times10^{-10}$ & 1629.602\\
Factored BF1 packed & $9.72\times10^{-10}$ & 1631.247\\
Factored BF1 storage & $9.72\times10^{-10}$ & 1630.208\\
FIL packed & $2.81\times10^{-11}$ & 1631.797\\
FIL storage & $2.81\times10^{-11}$ & 1631.364\\
Delta BF3 packed & $3.53\times10^{-12}$ & 1631.064\\
Delta BF3 storage & $3.53\times10^{-12}$ & 1630.977\\
Plain BF3 packed & $2.13\times10^{-12}$ & 1630.324\\
Plain BF3 storage & $2.13\times10^{-12}$ & 1630.461\\
\bottomrule\end{tabular}
\end{table}

FIL storage keeps the 94.40\% MSE reduction over axes-only BF3 storage and improves all 16 families, but its primary query time is 1.762\,ms longer. Its MSE is 13.16 times that of plain BF3 packed, which is 1.040\,ms faster, and 7.95 times that of delta BF3 storage, which is 0.388\,ms faster. In this run, the nominal saving from 126 to 77 row-products therefore does not give FIL a better accuracy--time trade-off than full BF3. We compare with plain BF3 packed because it is faster than plain BF3 storage at the same accuracy (Figure~\ref{fig:storage-reuse_cost}a).

\paragraph{Timing statistic.}
Relative to FIL packed, storage lowers the primary statistic by 0.433\,ms, with a decrease in all four family medians. The arithmetic mean of the same 28 repetitions per variant instead rises by 3.378\,ms (Table~\ref{tab:storage-reuse-timing}, Figure~\ref{fig:storage-reuse_cost}b). Both summaries use the same measurements; only the first was specified in advance as the primary statistic.

\begin{table}[!htbp]
\centering\small
\caption{FIL timing sensitivity in the same storage-reuse run. No repetition is excluded. All entries are in milliseconds.}
\label{tab:storage-reuse-timing}
\begin{tabular}{lrrr}\toprule
Statistic & Packed & Storage & Storage $-$ packed\\\midrule
Mean of four family medians & 1631.797 & 1631.364 & $-0.433$\\
Mean of all 28 repetitions & 1631.709 & 1635.087 & $+3.378$\\\bottomrule
\end{tabular}
\end{table}

One long repetition strongly affects the arithmetic mean: repetition 6 of FIL storage in family 214 takes 1843.2\,ms, and its native stage takes 193.8\,ms against 95.8--98.4\,ms in the other six repetitions. The delay occurs in a stage that storage reuse does not change, and no repetition is excluded. Across all 56 timing summaries, the interquartile range spans 0.24--11.78\,ms. Source acquisition is measured once per family and added to each repetition, so its cost is included but its variability is not. Four families with seven repetitions each are too few for tail-latency claims or for speedups on new images.

\begin{figure}[!htbp]
\centering
\includegraphics[width=\linewidth]{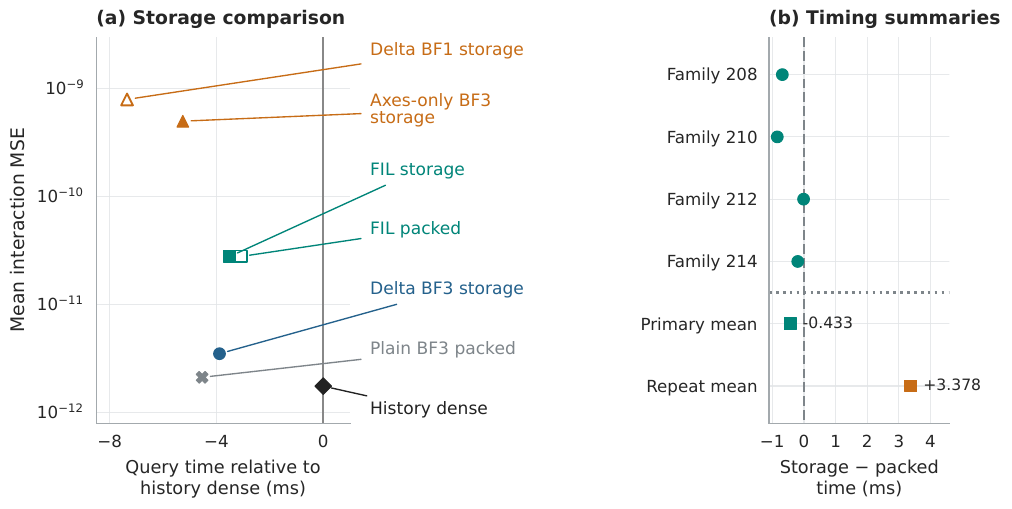}
\caption{Storage-reuse comparison. (a) Selected methods from Table~\ref{tab:storage-reuse}: 16-family accuracy and four-family timing, relative to history dense (1634.862\,ms). (b) Storage minus packed time: per-family differences of medians, their mean, and the difference of the arithmetic means of all 28 repetitions, which has the opposite sign.}
\label{fig:storage-reuse_cost}
\end{figure}

\paragraph{Packing and preparation sharing.}
A modular version of the packed code in Table~\ref{tab:storage-reuse} independently toggles packing ($P$) and shared gate/up component preparation and cast checks ($S$), with temporary-storage reuse disabled (Table~\ref{tab:factorial}). Preparation sharing has the larger effect. The unpacked FIL and delta BF3 projection paths from Table~\ref{tab:synthetic-main} are retimed as references in the same local benchmark: $p_0s_0$ is 1.038 and 1.095\,ms slower, respectively. The contrasts isolate choices within the four-arm implementation, rather than decomposing the whole-query speedup.

\begin{table}[!htbp]
\centering\small
\caption{Local-MLP packing/preparation $2\times2$ ablation. Cell means and paired differences average four families, in milliseconds. The difference of differences is $(p_1s_1-p_0s_1)-(p_1s_0-p_0s_0)$. Contrasts are computed before rounding.}
\label{tab:factorial}
\begin{tabular}{lrr}\toprule
Condition or contrast & FIL & Delta BF3\\\midrule
No packing, separate preparation: $p_0s_0$ & 24.83 & 23.45\\
No packing, shared preparation: $p_0s_1$ & 22.17 & 20.80\\
Packing, separate preparation: $p_1s_0$ & 24.14 & 23.14\\
Packing, shared preparation: $p_1s_1$ & 21.49 & 20.50\\
Table~\ref{tab:synthetic-main} projection path, local timing & 23.79 & 22.35\\\midrule
Packing with shared preparation: $p_1s_1-p_0s_1$ & $-0.682$ & $-0.298$\\
Preparation sharing with packing: $p_1s_1-p_1s_0$ & $-2.653$ & $-2.643$\\
Difference of differences & $+0.009$ & $+0.007$\\\bottomrule
\end{tabular}
\end{table}

For each gate or up projection, the right operand has shape $2560\times9728$. With $T$ active tokens, unpacked FIL uses two left operands of shape $14T\times2560$ (axis high and residual components) and one of shape $49T\times2560$ (mixed high component). Packing concatenates these into one $77T\times2560$ operand. Delta BF3 instead uses two $63T\times2560$ operands, or one packed $126T\times2560$ operand. The timed families 208, 210, 212, and 214 have $T=178,178,94,178$, respectively. Preparation sharing forms and casts the input components and checks their cast safety once for gate and up; separate preparation repeats those operations for each projection. At fixed packing, sharing leaves the GEMM shapes unchanged.

\paragraph{Memory.}
In the local FIL MLP, storage reuse lowers the mean time by 0.499\,ms. The incremental peak allocation falls from 2581.16 to 2243.50\,MiB for long inputs and from 1363.77 to 1185.46\,MiB for the short input, but the whole-query peak is 22133.33\,MiB in both variants. Profiler traces suggest that projection savings and reconstruction costs compete, but overlapping intervals and transfer counts do not show that index conversion is the cause. 

\FloatBarrier
\section{Audit Batching and Numerical Equivalence}
\label{sec:audit-batching}
The audit comparison in Table~\ref{tab:audit-batching} uses the same four synthetic families at decoder 13. It changes retrieval of device audit results after fixed index reuse and reconstruction, preserving integer counters, finite checks, and fallback decisions. Matched original/batched variants have identical saved predictions. All conditions pass the fixed fidelity gates, with zero arithmetic fallbacks.

\begin{table}[!htbp]
\centering\small
\caption{Batched minus original audit retrieval within the audit-batching comparison. Synchronization counts are mean profiler counts over four families. Local and whole-query time changes are in milliseconds, measured separately.}
\label{tab:audit}
\begin{tabular}{lrrr}\toprule
Method & Synchronizations, original $\to$ batched & Local MLP $\Delta$ & Whole-query $\Delta$\\\midrule
FIL & $24\to12$ & $-0.542$ & $+0.080$\\
Axes-only BF3 & $24\to12$ & $-0.366$ & $-0.322$\\
Delta BF3 & $15\to8$ & $-0.215$ & $-0.888$\\
Plain BF3 & $15\to8$ & $-0.214$ & $+0.113$\\\bottomrule
\end{tabular}
\end{table}

FIL's local MLP time falls from 19.90 to 19.35\,ms while its whole-query time rises by 0.080\,ms. The two scopes were measured separately and cannot be added into an exact timing decomposition. Execution context, cache state, allocation, and temporal variation are possible sources of the difference, but our records do not separate them. FIL's whole-query peak memory is about 22133\,MiB and its local incremental peak about 1979\,MiB, both unchanged by batching. The precomputed BF16 high and residual copies of the gate/up weights occupy 190\,MiB in addition to the FP32 parameters. The zero residual copies remain allocated even though their products are skipped.

In the natural-image run (Table~\ref{tab:natural-original-audit}), the original and batched variants give identical logits in all 40 paired method--case comparisons. Batching lowers FIL's local time from 15.34 to 14.95\,ms and its synchronizations from 24 to 12, while the whole-query time falls by only 0.06\,ms (image-cluster 95\% interval $[-0.54,+0.63]$\,ms).

\begin{table}[!htbp]
\centering\small
\caption{Natural-image run with both audit read-back variants: four image means, each averaging two questions, and the mean over images of seven-repeat case medians. Matched original and batched variants give identical logits.}
\label{tab:natural-original-audit}
\begin{tabular}{lrr}\toprule
Method & Mean interaction MSE & Primary query time (ms)\\\midrule
History dense & $6.96\times10^{-12}$ & 1405.438\\
Delta BF1 & $8.34\times10^{-9}$ & 1398.411\\
Axes-only BF3, original audit & $7.72\times10^{-9}$ & 1399.489\\
FIL, original audit & $1.16\times10^{-11}$ & 1400.179\\
Delta BF3, original audit & $7.71\times10^{-12}$ & 1400.362\\
Plain BF3, original audit & $1.08\times10^{-11}$ & 1400.378\\
Axes-only BF3, batched audit & $7.72\times10^{-9}$ & 1399.353\\
FIL, batched audit & $1.16\times10^{-11}$ & 1400.118\\
Delta BF3, batched audit & $7.71\times10^{-12}$ & 1400.542\\
Plain BF3, batched audit & $1.08\times10^{-11}$ & 1400.070\\
\bottomrule\end{tabular}
\end{table}

Applying the decomposition of Eq.~\eqref{eq:moment-split} to the four-family subset, the diagonal and signed cross changes account for 99.88\% and 0.12\% of the loss reduction. On the natural images, the diagonal and signed cross sums are $7.61\times10^{-9}$ and $1.10\times10^{-10}$ for axes-only BF3 and $1.38\times10^{-11}$ and $-2.16\times10^{-12}$ for FIL.

\section{Four-MLP Natural-Image Baselines}
\label{sec:natural-baseline}
This baseline study evaluates six existing precision methods on 64 COCO val2014 images~\cite{lin2014}, two questions per image, and four MLPs (13--16). It contains no FIL condition. Both questions and all candidates stay within their image cluster. Appendix~\ref{app:natural} specifies the preprocessing and intervention rules, including which questions have not been reviewed for semantic ambiguity.

\begin{table}[t]
\centering\small
\caption{Four-MLP natural-image baseline evaluation: 128 cases in 64 image clusters, four MLPs. The two questions per image receive equal weight. Interaction MSE is the image-mean five-group loss; Max error is the global maximum probability error over cases, states, and groups; $^\dagger$ exceeds the 0.01 gate. Time differences use within-case seven-repeat medians on eight images with two questions each, then average by image. FIL is not evaluated here.}
\label{tab:natural-baseline}
\begin{tabular}{lrrr}\toprule
Method & Interaction MSE & Max error & Time $-$ history dense (ms)\\\midrule
History dense & $7.29\times10^{-12}$ & $3.16\times10^{-5}$ & 0.000\\
Plain BF1 & $2.06\times10^{-7}$ & $8.41\times10^{-3}$ & $-32.843$\\
Delta BF1 & $2.53\times10^{-8}$ & $1.32\times10^{-2}$$^\dagger$ & $-31.324$\\
Plain BF3 & $9.42\times10^{-12}$ & $6.98\times10^{-5}$ & $-14.656$\\
Delta BF3 & $6.88\times10^{-12}$ & $3.62\times10^{-5}$ & $-12.912$\\
Split8 BF1 & $2.56\times10^{-11}$ & $9.21\times10^{-5}$ & $-11.245$\\\bottomrule
\end{tabular}
\end{table}

Delta BF1 reduces image-mean interaction MSE by $87.69\%$ relative to plain BF1 and improves 63/64 images. The paired mean difference is $-1.81\times10^{-7}$, with a descriptive image-bootstrap 95\% interval $[-2.99\times10^{-7},-8.76\times10^{-8}]$. On the timing subset it is $1.518$\,ms slower on average and slower in all eight image clusters. 

Figure~\ref{fig:natural_tail} shows all 64 paired image means and all 128 case-level maximum errors.

\begin{figure}[!htbp]
\centering
\includegraphics[width=\linewidth]{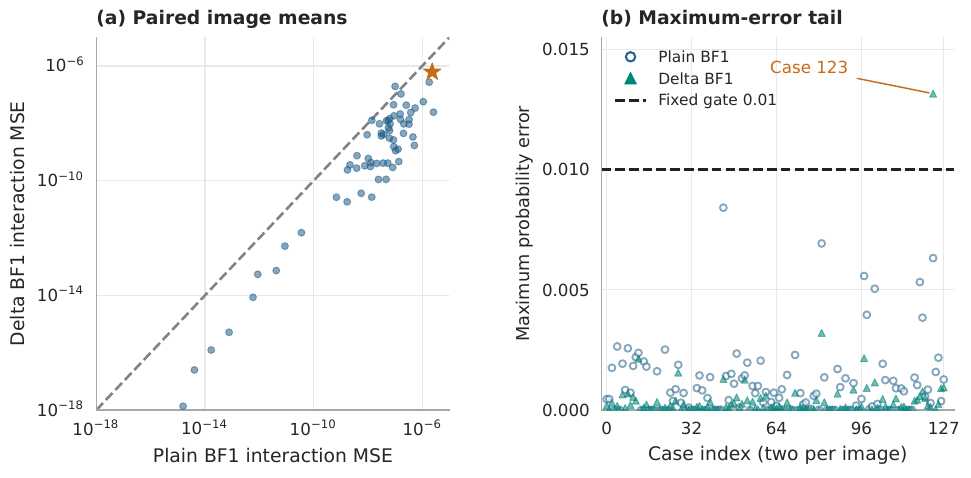}
\caption{Four-MLP natural-image baselines, without FIL. (a) Image-mean paired MSE over 64 images. (b) Maximum error for all 128 cases. The star marks the parent image of case 123, which exceeds the fixed 0.01 gate.}
\label{fig:natural_tail}
\end{figure}

\paragraph{A maximum-error failure.}
Case 123 (image 61, COCO image 108669) has maximum probability error 0.0132 under delta BF1, above the 0.01 gate, although its endpoint MSE of $9.28\times10^{-7}$ passes the $10^{-4}$ gate. This one case contributes 57.26\% of delta BF1's PairLoss sum and 37.95\% of its interaction-MSE sum, and it is included in every aggregate.

The largest discrepancy is at candidate 41 ($i=5$, $j=1$) in OTHER, where the reference probability 0.456354 becomes 0.469509 under delta BF1; the total four-color mass changes by the same amount in the opposite direction. After renormalizing over the four colors, the maximum error over all 128 cases is only 0.00298, but that is a different output event. OTHER covers the whole vocabulary outside the four color groups; these grouped outputs do not identify which non-color tokens cause the discrepancy.

Figure~\ref{fig:case123} shows the signed five-group discrepancy at the worst candidate.

\begin{figure}[!htbp]
\centering
\includegraphics[width=.72\linewidth]{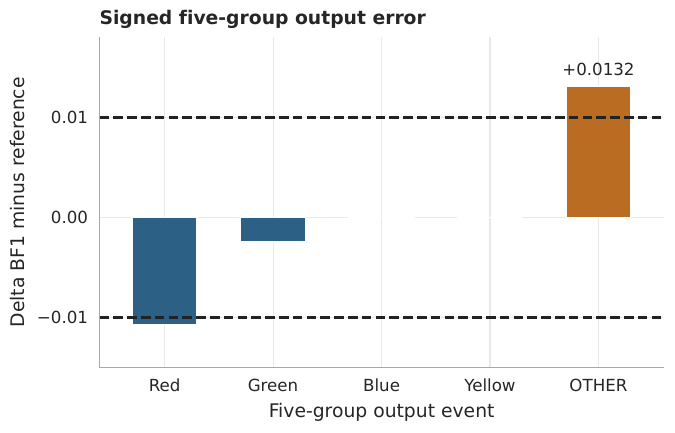}
\caption{Signed five-group probability errors for case 123 at $i=5,j=1$, with the fixed $\pm0.01$ gate. The OTHER error exceeds the gate. The case remains in all 128-case aggregates.}
\label{fig:case123}
\end{figure}

Image 61 is a monochrome photograph with several apples, while the question asks only about ``the apple.'' The ambiguous reference limits semantic interpretation. Its connection to the numerical error is unknown, so the case remains in the evaluation.

\FloatBarrier
\paragraph{Complete image inventory.}
Table~\ref{tab:coco-inventory} identifies all 64 images and their two source annotations. Both questions are timed for image indices 0, 8, 16, 24, 32, 40, 48, and 56. The annotation categories supply the question words, and the ordered A/B masks supply the donors under the rules in Appendix~\ref{app:natural}.

\begin{table}[!htbp]\centering\footnotesize
\caption{Complete COCO val2014 inventory for the four-MLP baseline study. A and B are annotation IDs. Indices are local to this inventory; the region-pair study uses the different images in Table~\ref{tab:region-stimuli}.}
\label{tab:coco-inventory}
\begin{tabular}{rrrr@{\hspace{1.2em}}rrrr}\toprule
Index & Image ID & A & B & Index & Image ID & A & B\\\midrule
0 & 274083 & 140224 & 168652 & 32 & 302219 & 1165906 & 2211551\\
1 & 137004 & 1538899 & 1558407 & 33 & 412136 & 78954 & 718171\\
2 & 346752 & 1531577 & 1913818 & 34 & 382207 & 90480 & 1537396\\
3 & 241326 & 9113 & 47558 & 35 & 70657 & 1041414 & 2109880\\
4 & 524775 & 138571 & 168061 & 36 & 375461 & 1046044 & 1536142\\
5 & 48014 & 11478 & 137103 & 37 & 197915 & 38171 & 82048\\
6 & 56714 & 1041988 & 2111352 & 38 & 460783 & 53143 & 1042285\\
7 & 323515 & 1513415 & 1537112 & 39 & 549518 & 713545 & 1043471\\
8 & 494077 & 1041753 & 1051429 & 40 & 272615 & 714126 & 1042498\\
9 & 311041 & 1051137 & 1506415 & 41 & 267182 & 1905054 & 1906261\\
10 & 335810 & 1556588 & 1904525 & 42 & 94563 & 1049475 & 1054164\\
11 & 484080 & 679708 & 1556090 & 43 & 171199 & 92376 & 672757\\
12 & 200365 & 125958 & 133079 & 44 & 238498 & 86154 & 1901875\\
13 & 371890 & 48948 & 125346 & 45 & 455345 & 1538356 & 1548380\\
14 & 22446 & 679999 & 1039246 & 46 & 307138 & 1556515 & 2093456\\
15 & 548506 & 1043772 & 1047747 & 47 & 577451 & 1049391 & 1053414\\
16 & 64307 & 676954 & 710873 & 48 & 59652 & 6832 & 23168\\
17 & 511111 & 678816 & 1053251 & 49 & 42215 & 1052019 & 1509788\\
18 & 286322 & 88332 & 714742 & 50 & 128598 & 1050705 & 1906607\\
19 & 54598 & 671772 & 1171738 & 51 & 221094 & 137822 & 1955661\\
20 & 368096 & 709798 & 1052165 & 52 & 294297 & 716972 & 1864712\\
21 & 131416 & 1607627 & 1883690 & 53 & 37688 & 674039 & 2107624\\
22 & 250790 & 710099 & 1882909 & 54 & 343248 & 81232 & 2213786\\
23 & 335148 & 1178926 & 1531897 & 55 & 535013 & 1043367 & 1879310\\
24 & 282790 & 1835086 & 2073357 & 56 & 79651 & 83672 & 1041806\\
25 & 495419 & 86544 & 675993 & 57 & 177449 & 1050770 & 2111859\\
26 & 316553 & 3903 & 60885 & 58 & 397719 & 89305 & 682175\\
27 & 570548 & 1041778 & 1900441 & 59 & 464622 & 1514191 & 2094395\\
28 & 409042 & 21372 & 709846 & 60 & 313718 & 55283 & 168669\\
29 & 450728 & 2037088 & 2050386 & 61 & 108669 & 1535648 & 1548458\\
30 & 298138 & 92669 & 1507425 & 62 & 494580 & 82432 & 1537498\\
31 & 301014 & 50858 & 712063 & 63 & 190156 & 52155 & 672670\\
\bottomrule\end{tabular}\end{table}

\section{Native-Relative and Low-Rank Baselines}
\label{app:historical}
These earlier studies motivated the execution design but do not evaluate FIL. Their four-MLP timings come from a different setup and are not comparable with the single-MLP timings in Section~\ref{sec:experiments}.

\subsection{Native-relative computation on the frozen VLM}
The native-relative baseline experiment uses Qwen3-VL-4B-Instruct~\cite{qwen2025} with four selected decoder MLPs, indices 13--16, hidden width 2560, and MLP width 9728. Each synthetic family has an $8\times8$ grid with strengths from 0 to 1. The comparison was fixed for 64 new synthetic families after development on the source indices 208--223 of Appendix~\ref{app:synthetic}. The 64 new families use indices $i=0,\ldots,63$, scene identifiers $160000000+i$, and PCG64 seeds $191601016+i$, with the same rendering, draw sequence, question, and donor rules as Appendix~\ref{app:synthetic}. Before model evaluation, we rejected a family if any native or donor scene repeated a scene within that family, among selected families, or in the prior development inventory. Scene identity uses RGB fills, layout, appearance, and checkerboard phase. The first 64 draws passed this check. Timing uses indices 0, 8, 16, 24, 32, 40, 48, and 56. For the present paper, these 64 families are development data.

The implementation keeps native computation, attention, nonlinearities, full down projections, and residual updates in the original FP32 path. Selected gate/up operands use BF16 with FP32 accumulation; TF32 and reduced-precision BF16 reductions are disabled. The backend is an NVIDIA GH200 120GB with PyTorch 2.11.0+cu130 and Transformers 5.12.1. The model/tokenizer configuration and output-token definitions are given in Appendix~\ref{app:tokens}. Timing uses eight fixed families, two warmups, and seven repetitions per method; reported times average the within-family medians. Range checks and FP32 fallbacks were active, and no arithmetic fallback occurred.

\begin{table}[t]
\centering\small
\caption{Native-relative baselines: 64 synthetic families, four MLPs. Losses are family means; Max error is the global five-group maximum. Time averages seven-repeat medians for eight families.}
\label{tab:vlm}
\begin{tabular}{lrrrrr}
\toprule
Method & PairLoss $\downarrow$ & Interaction MSE $\downarrow$ & Max error $\downarrow$ & Total ms & $\Delta$ ms vs. history dense\\
\midrule
History dense & $3.73\!\times\!10^{-12}$ & $3.04\!\times\!10^{-12}$ & $3.032\!\times\!10^{-5}$ & 1406.097 & ---\\
Plain BF1 & $4.17\!\times\!10^{-8}$ & $3.32\!\times\!10^{-8}$ & $5.296\!\times\!10^{-3}$ & 1384.171 & $-21.926$\\
Delta BF1 & $4.88\!\times\!10^{-9}$ & $3.71\!\times\!10^{-9}$ & $2.030\!\times\!10^{-3}$ & 1385.366 & $-20.731$\\
Plain BF3 & $4.24\!\times\!10^{-12}$ & $3.74\!\times\!10^{-12}$ & $3.131\!\times\!10^{-5}$ & 1400.289 & $-5.808$\\
Delta BF3 & $3.41\!\times\!10^{-12}$ & $3.27\!\times\!10^{-12}$ & $3.105\!\times\!10^{-5}$ & 1401.352 & $-4.745$\\
Split8 BF1 & $1.96\!\times\!10^{-11}$ & $2.26\!\times\!10^{-11}$ & $8.596\!\times\!10^{-5}$ & 1403.517 & $-2.581$\\
Input cap40 & $1.11\!\times\!10^{-8}$ & $8.53\!\times\!10^{-9}$ & $3.201\!\times\!10^{-3}$ & 1414.012 & $+7.915$\\
\bottomrule
\end{tabular}
\end{table}

Delta BF1 reduces the mean PairLoss by 88.32\% relative to plain BF1 and improves 62 of 64 families (Table~\ref{tab:vlm}). Its global maximum probability error falls from $5.30\times10^{-3}$ to $2.03\times10^{-3}$. It costs 1.20\,ms more on average and is slower in all eight timed families, so it buys accuracy at a small cost. 

Relative to history dense, delta BF1 saves 20.731\,ms (1.47\% of the mean time); per-family speedups range from 1.003 to 1.025, and five of eight families pass a $1.01\times$ speedup threshold fixed in advance. Ordinary unshared execution takes 2058.096\,ms on average in the same data, so most of its difference from the shared implementations comes from history sharing rather than native centering. Split8 BF1 is slower than plain BF3 in all eight timed families and has a higher mean PairLoss.

\paragraph{Candidate-subspace baselines.}
Split8 BF1 and Input cap40 use a proper orthogonal decomposition across the 63 non-native candidates. At each selected MLP, let the $63\times T\times d$ tensor $\mathcal D$ contain the current post-normalization inputs minus the native input. Flatten only its token and feature dimensions to obtain the $63\times Td$ matrix $D$. With $q_0=\mathbf1/\sqrt{63}$ and $J=I-q_0q_0^\top$, the basis $Q_r$ contains $q_0$ and up to $r-1$ leading positive eigenvectors of $JDD^\top J$. Thus the rank refers to the candidate dimension, with all active tokens and features pooled in one Gram matrix. The basis is recomputed at each MLP and shared by gate and up.

We accumulate the Gram matrix in FP64 over blocks of 131072 flattened columns. An eigenvalue is treated as positive only if it exceeds $128u_{64}\max(|\operatorname{tr}(DD^\top)|,\lambda_1,10^{-300})$, where $u_{64}$ is the FP64 machine epsilon and $\lambda_1$ is the largest centered eigenvalue. When the rank cap would split a cluster of eigenvalues whose consecutive gaps are at most $10^{-10}\lambda_1$, the whole cluster is dropped. Both rank caps include the constant direction. Dropping a boundary cluster or having fewer positive eigenvalues can reduce the effective rank below 8 or 40. The centered eigenvectors are reorthogonalized in FP64, and candidate mixing uses FP32 copies of the basis.

Write $\Pi_W(M)$ for applying the FP32 linear map $W$ to every token of tensor $M$, and $\Pi_{1,W}(M)$ for its BF1 counterpart. Candidate mixing acts on the first tensor dimension. With $M_r=Q_r^\top\mathcal D$ and $R_r=\mathcal D-Q_rM_r$, the gate/up outputs are
\[
\begin{aligned}
\widehat Z_{\mathrm{Split8}}&=Z_0+Q_8\Pi_W(M_8)+\Pi_{1,W}(R_8),\\
\widehat Z_{\mathrm{cap40}}&=Z_0+Q_{40}\Pi_W(M_{40}).
\end{aligned}
\]
The native output $Z_0$ is broadcast across candidates. Split8 retains the full residual at BF1; cap40 discards the residual. Formation, mixing, and reconstruction use FP32. Both evaluate SiLU, gate--up multiplication, the full down projection, and residual updates for every candidate; neither truncates the nonlinear output to the candidate subspace.

\paragraph{Why the existing gate is insufficient.}
All seven methods pass the original gates (maximum probability error $\le0.01$, endpoint MSE $\le10^{-4}$) on all 64 families. Each also has zero sign errors on the same 2,233 interaction components with reference magnitude above $10^{-4}$. These components are clustered within the 64 families. The gates therefore do not separate the methods.

A post hoc tolerance curve does (Figure~\ref{fig:tolerance}). At maximum-error thresholds $10^{-4}$, $5\times10^{-4}$, and $10^{-3}$, plain BF1 passes 8, 22, and 44 families; delta BF1 passes 24, 49, and 61. Both BF3 variants pass all 64 at each of these thresholds. These stricter thresholds are descriptive and were not preregistered.

\begin{figure}[!htbp]
\centering
\includegraphics[width=\linewidth]{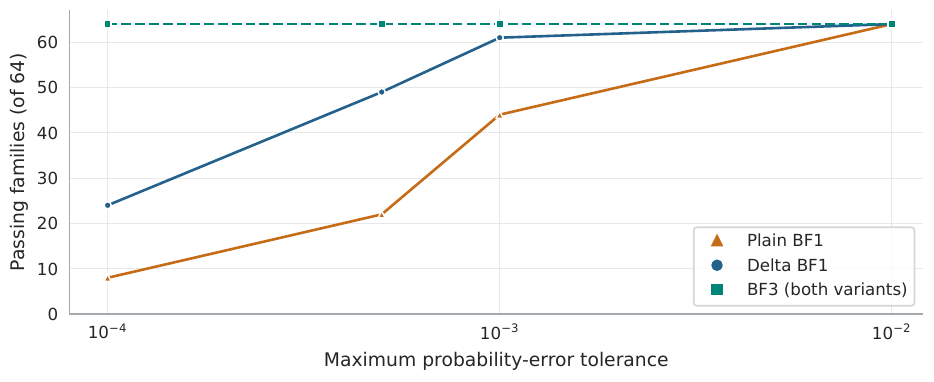}
\caption{Post hoc tolerance curve for the 64-family native-relative study. Both BF3 variants pass all 64 families at every threshold shown; the original gate is 0.01.}\label{fig:tolerance}\end{figure}

\subsection{Limits of a small static error subspace}
The static low-rank correction experiment applies a correction to delta BF1 at decoder MLP 13, using the 16 development families 208--223. The calibration folds are $F_A=\{208,209,212,213,216,217,220,221\}$ and $F_B=\{210,211,214,215,218,219,222,223\}$. Each family is evaluated with factors fitted to the other fold; timing uses 208, 210, 212, and 214. A rank-32 measured projection correction reduces PairLoss from $8.57\times10^{-10}$ to $8.40\times10^{-10}$ (1.99\%) and improves eight of 16 families. The paired-family bootstrap interval for the mean difference, $[-9.41,4.93]\times10^{-11}$, includes zero. Its global maximum probability error increases from 0.000385 to 0.000402, and the average time increases by 3.187\,ms across four timed families.

For gate and up independently, stack the native-relative inputs of all 63 non-native candidates and active tokens of family $f$ into an $n_f\times2560$ matrix $D_f$. Let $Y_f$ be its BF1 projection and $E_f=Y_f-\operatorname{FP32}(D_fW^\top)$. The calibration matrix is the uncentered, equal-family second moment
\[
 M_F=\frac1{8}\sum_{f\in F}\frac{E_f^\top E_f}{n_f}.
\]
Subtraction and moment accumulation use FP64. The leading eigenvectors of $M_F$ give a $9728\times r$ matrix $V$, which is stored in FP32. Using that stored basis, $A=V^\top W$ is evaluated in FP64 and rounded to FP32. As for the subspace baselines, a cluster of eigenvalues that the rank cap would split, with relative gaps at most $10^{-10}$, is dropped in full. At evaluation, the correction is
\[
 Y_{\mathrm{corr}}=Y+(DA^\top-YV)V^\top,
\]
with the thin products and additions in FP32. The native projection is then added, followed by the original nonlinear continuation. In exact arithmetic with orthonormal $V$, the correction removes $EVV^\top$. Error-energy recovery for one family is $\lVert E_fV\rVert_F^2/\lVert E_f\rVert_F^2$, measured against the FP32 linear action before native addition. Table~\ref{tab:rank} takes its median over the 16 opposite-fold evaluations, separately for gate and up.

The opposite-fold median error-energy recovery is low (Table~\ref{tab:rank}). At rank 32, it is $7.09\%$ for gate and $3.70\%$ for up. The measured FP32 correction is close to ideal projection in the saved diagnostics, so poor subtraction accuracy is not the main explanation for the limited recovery. The result limits this particular fixed-subspace correction; methods with other calibration, ranks, or structures require their own evaluation.

\begin{table}[t]
\centering\small
\caption{Static low-rank correction: median error-energy recovery on the opposite calibration fold (16 development families).}
\label{tab:rank}
\begin{tabular}{rrr}\toprule
Rank & Gate & Up\\\midrule
32 & $7.09\%$ & $3.70\%$\\
64 & $10.61\%$ & $6.61\%$\\
128 & $16.14\%$ & $11.71\%$\\\bottomrule
\end{tabular}
\end{table}

\section{Controlled CPU Experiment}
\label{app:reproduce}
This controlled network isolates the amplitude dependence of the local numerical effect.

\subsection{Setup and results}
\label{sec:cpu}
The NumPy probe constructs one random two-layer RMSNorm--SwiGLU--residual network with hidden width 32, MLP width 48, and a four-output softmax. Thirty-two generated families share its weights; they are not 32 independently trained models. All weights are rounded once to exactly representable BF16 values. The probe emulates round-to-nearest-even operand casts and performs matrix multiplication and nonlinear computation in FP64 or FP32. The reference is dense FP64. There is no attention, pretrained checkpoint, GPU kernel, or runtime measurement.

Each family starts at $x_0+t(\alpha_i a+\beta_j b)$. Every selected projection decomposes the actual post-normalization state, including mixed components created by the first layer. Table~\ref{tab:cpu} reports the probe results at $t=1/16$ with FP32 execution. The two matched-precision methods are Factored axes BF3 and axes-only BF3 with independent joints.

\begin{table}[t]
\centering\small
\caption{CPU probe at $t=1/16$: FP32 execution against dense FP64. RMS pools 32 families in one generated network.}
\label{tab:cpu}
\begin{tabular}{lrr}\toprule
Method & Interaction RMS & Endpoint RMS\\\midrule
Delta BF1 & $4.54\!\times\!10^{-6}$ & $2.97\!\times\!10^{-6}$\\
Factored BF1 & $1.48\!\times\!10^{-7}$ & $2.99\!\times\!10^{-6}$\\
Axes-only BF3 & $3.17\!\times\!10^{-6}$ & $2.77\!\times\!10^{-6}$\\
Factored axes BF3 & $5.13\!\times\!10^{-8}$ & $3.91\!\times\!10^{-8}$\\
Delta BF3 & $3.23\!\times\!10^{-8}$ & $1.68\!\times\!10^{-8}$\\
Dense FP32 & $3.26\!\times\!10^{-8}$ & $1.62\!\times\!10^{-8}$\\\bottomrule
\end{tabular}
\end{table}

Factored BF1 has $30.6\times$ lower interaction RMS than delta BF1, despite almost unchanged endpoint RMS. Factored axes BF3 has $61.8\times$ lower interaction RMS than the matched axes-only control. As in the VLM, axis precision alone does not explain the gain. Full delta BF3 is more accurate still. At the ordinary full amplitude $t=1$, Factored axes BF3 has interaction RMS $7.14\times10^{-6}$, compared with $8.68\times10^{-8}$ for full delta BF3. The factor of 61.8 therefore holds against the matched control at $t=1/16$, not against full BF3.

\paragraph{Amplitude scaling.}
The probe uses the fixed three-point window $t\in\{2^{-10},2^{-8},2^{-6}\}$ for two-layer log--log fits. FP64-execution interaction-error slopes are 0.990 for delta BF1, 1.991 for Factored BF1, 1.001 for axes-only BF3, and 2.007 for Factored axes BF3. These descriptive slopes are consistent with the local order analysis. FP32 execution reaches a floor near $3.3\times10^{-8}$, and its factored slopes over the same window are near zero (Fig.~\ref{fig:scaling}). 

\begin{figure}[!htbp]
\centering
\includegraphics[width=\linewidth]{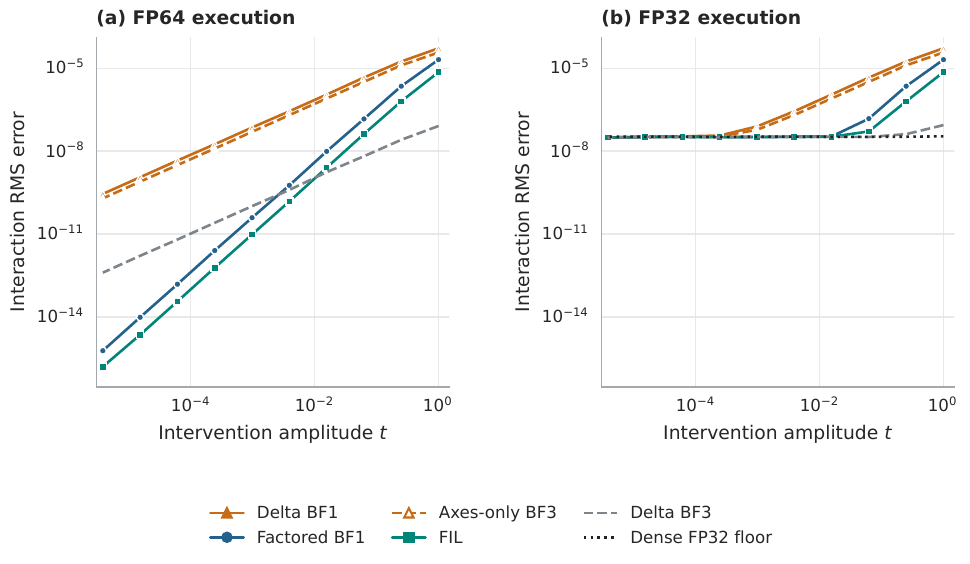}
\caption{CPU amplitude sweep against dense FP64. Shared-axis methods show approximately second-order decay in the fixed FP64 fit window; FP32 curves reach a numerical floor. The network has two RMSNorm--SwiGLU residual blocks and no attention.}\label{fig:scaling}\end{figure}

\paragraph{A failure case.}
The stress condition adds a full-dimensional random mixed component with zero axes to the initial grid. At $t=0.25$ and mixed scale 16, Factored BF1 has interaction RMS $3.81\times10^{-5}$, versus $3.78\times10^{-5}$ for delta BF1. Factored axes BF3 also loses to the matched axes-only control ($3.62\times10^{-5}$ versus $3.58\times10^{-5}$). Sharing therefore does not always win, and the assumption that mixed components are second order matters. This condition injects an independent joint residual into the additive family of Eq.~\eqref{eq:input}, so it tests robustness to large mixed components; it does not show that they occur in our VLM protocol.

\subsection{Configuration and scope}
The probe uses NumPy's PCG64 generator with seed 20261017. Every array is drawn from the standard normal distribution in FP64 using \texttt{Generator.normal}, with the shapes and order in Table~\ref{tab:cpu-draws}. Each weight array is multiplied by its listed scale, and its entries are rounded to BF16-representable values. The first four draws belong to the one-layer diagnostic and fix the generator state at which the two-layer network is built. RMSNorm uses $\epsilon=10^{-5}$ and no learned affine parameters.

\begin{table}[!htbp]\centering\small
\caption{Complete random-draw order for the CPU probe. Rows containing several calls are executed left to right. No random draws occur between these calls. Scales are applied after drawing.}
\label{tab:cpu-draws}
\begin{tabular}{clll}\toprule
Calls & Arrays & Shapes & Scales\\\midrule
1 & One-layer weight & $(64,32)$ & $1/\sqrt{64}$\\
2--4 & One-layer $x,a,b$ & $(512,64)$ each & $1$\\
5--7 & Layer 1 gate, up, down & $(32,48),(32,48),(48,32)$ & $1/\sqrt{32},1/\sqrt{32},0.3/\sqrt{48}$\\
8--10 & Layer 2 gate, up, down & $(32,48),(32,48),(48,32)$ & $1/\sqrt{32},1/\sqrt{32},0.3/\sqrt{48}$\\
11 & Output head & $(32,4)$ & $0.4/\sqrt{32}$\\
12--14 & Two-layer $x_0,a,b$ & $(32,1,1,32)$ each & $1$\\
15 & Mixed bank & $(32,8,8,32)$ & $1$\\\bottomrule
\end{tabular}\end{table}
The two amplitude axes take the eight values $0,1/7,\ldots,1$. After the last draw, the mixed bank's entries with either grid index zero are set to zero; the bank is then reused at every stress scale.

The two-layer sweep covers ten amplitudes $t=2^{-k}$ for $k=0,2,\ldots,18$, plus five large-mixed conditions at $t=0.25$ with mixed scale $0,0.25,1,4,16$. Each case has five approximate methods in FP64 and the same five plus dense in FP32, totaling $15\times11=165$ rows. The large-mixed bank is random over joint entries and zero on the axes; its scale is independent of $t^2$. Every error is computed against the corresponding dense FP64 output, not a common reference from another amplitude.

BF16 casts first represent values as FP32, then implement round-to-nearest-even by bit manipulation. Consequently, ``FP64 execution'' describes accumulation and nonlinear evaluation, not a native arbitrary-precision-to-BF16 conversion. The experiment is limited to its finite generated values and does not validate all subnormal, overflow, or hardware reduction behavior.

The probe uses a single generated network, so it does not measure variation across networks or pretrained models.

\FloatBarrier
\section{Natural-Image Stimuli and Interventions}
\label{app:natural}
The two-question natural-image protocol uses 64 Microsoft COCO val2014 images~\cite{lin2014}, two object-color questions each, and six baseline precision methods. RGB images are resized to $448\times448$ with bicubic interpolation; masks use nearest-neighbor interpolation. Overlapping pixels are removed from both masks. Each cell of the $14\times14$ visual-token grid is assigned to the mask with greater area, with positive ties assigned to A and empty cells unowned. This geometry is fixed across methods. Resizing changes aspect ratio for non-square inputs.

Each donor blends pixels within its mask halfway toward a palette color. Native and individual donor images are forwarded under the same question. Donor-minus-native differences on owned tokens at decoder-12 output define the internal intervention family. Both questions stay within the image cluster. Numerical evaluation covers all 128 cases, but not all questions and masks have been reviewed semantically; the apple question of case 123 is known to be ambiguous.

\paragraph{Single-MLP FIL inputs.}
The FIL study uses four images from the same inventory, with two category questions per image; a fifth image is used only to recheck the failure case of Appendix~\ref{sec:natural-baseline} (Table~\ref{tab:natural_inputs}). Case index is $2\times\text{image index}+q$, with $q\in\{0,1\}$ selecting the first or second annotated category. The question is exactly ``What color is the \emph{category}? Answer with one word.'' An annotation ID fixes the source mask but does not ensure that this verbal reference is semantically unique. The two donors and their geometric supports remain the same for both questions; only the text changes. Native and each individual donor are forwarded under the corresponding question. This source differs from the synthetic panel question and renderer in Appendix~\ref{app:synthetic}.

\begin{table}[!htbp]\centering\small
\caption{Natural inputs from COCO val2014. Category/annotation pairs identify donor A and B masks and questions $q=0,1$. Image 61 is used only to recheck the failure case of Appendix~\ref{sec:natural-baseline}.}
\label{tab:natural_inputs}
\begin{tabular}{rrlll}\toprule
Index & COCO image ID & A: category / annotation & B: category / annotation & Role\\\midrule
0 & 274083 & car / 140224 & bus / 168652 & Main\\
16 & 64307 & cup / 676954 & bowl / 710873 & Main\\
32 & 302219 & backpack / 1165906 & handbag / 2211551 & Main\\
48 & 59652 & dog / 6832 & potted plant / 23168 & Main\\
61 & 108669 & bowl / 1535648 & apple / 1548458 & Recheck\\\bottomrule
\end{tabular}\end{table}

The fixed natural palette is $[(220,40,40),(40,180,70),(40,80,220),(220,190,40)]$. For image index $k$, donor A blends its owned mask halfway toward palette entry $k\bmod4$, and donor B separately blends its mask toward entry $(k+2)\bmod4$. Thus all four primary images use red $(220,40,40)$ for A and blue $(40,80,220)$ for B; image 61 uses green and yellow. Pixel and token ownership follow the rules above.

\paragraph{Implementation variants.}
History dense runs the exact-history executor without approximate projections. Delta BF1 uses packed non-native candidates and storage reuse. Axes-only BF3 and FIL additionally use fixed device indices and fused reconstruction. Full delta BF3 and plain BF3 use their optimized packed implementations with storage and index reuse. Each of the four BF3-based methods is evaluated with both original and batched retrieval of the same audit checks. History dense and delta BF1 have no paired audit variant in this comparison. Table~\ref{tab:natural-original-audit} lists all ten conditions; FIL/axes-only/plain/delta BF3 comparisons in Section~\ref{sec:natural-input} use the batched variants unless explicitly stated otherwise. The batched variant was not chosen per method by observed speed.

\FloatBarrier
\section{Region-Pair Stimuli and Execution}
\label{app:region-inputs}
The region-pair study uses four COCO val2014 images with four fixed regions each (Table~\ref{tab:region-stimuli}). Region 0 supplies the question category: ``What color is the \emph{category}? Answer with one word.'' The six pairs, in order, are $(0,1),(0,2),(0,3),(1,2),(1,3),(2,3)$. These are development inputs, with no independent semantic-accuracy evaluation.

Unlike the two-question study, resizing preserves aspect ratio: the image is fitted inside $448\times448$, with bicubic interpolation and centered padding of RGB $(127,127,127)$. Masks use nearest-neighbor interpolation. Pixel overlap is assigned once in ascending annotation-ID order. Each cell of the $14\times14$ visual-token grid belongs to the region with greatest mask coverage, with positive ties broken by region index and empty cells unowned. This global ownership remains fixed across all six pairs.

Each donor separately blends its region halfway toward one of the four palette colors listed in Appendix~\ref{app:natural}. For image index $k$ and region $r$, the palette entry is $(k+r)\bmod4$. Native and all four donors are forwarded under the same question, and their supported decoder-12 differences define the six intervention families. All inputs contain 217 tokens. The common frontiers are 89, 97, 96, and 92 for indices 0, 3, 13, and 22, respectively. Before the common frontier, even tokens after a pair's own full-branch frontier use dense MLP computation.

\begin{table}[!htbp]\centering\small
\caption{Region-pair study stimuli from COCO val2014. Regions are listed in order 0--3, as category (annotation ID); region 0 is queried. These indices are local to this study.}
\label{tab:region-stimuli}
\begin{tabular}{rrp{.69\linewidth}}\toprule
Index & COCO ID & Regions 0--3\\\midrule
0 & 529850 & cat (50214); person (231125); suitcase (1182980); refrigerator (1646710)\\[3pt]
3 & 302789 & backpack (1167322); suitcase (1185989); bed (1611695); refrigerator (1646862)\\[3pt]
13 & 315577 & dog (11796); couch (111776); laptop (1099186); book (1147889)\\[3pt]
22 & 221911 & orange (1051333); cup (674996); cake (1088518); knife (1892941)\\[3pt]
\bottomrule\end{tabular}\end{table}

The five arithmetic methods are delta BF1, axes-only BF3, FIL, full delta BF3, and full plain BF3, each with both pair-specific and common precision ranges; history dense is the eleventh method. All 264 method--pair records pass the maximum-error and endpoint-MSE gates, with no arithmetic fallback. The main comparison uses the common-range variants. The pair-specific variants repeat an earlier evaluation on the same images and are not additional independent evidence. For all 24 FIL/control pairs, the execution logs report the same multiply--accumulate count, the same packed projection kernel shapes, and the same number of audit transfers.

Timing covers candidate execution, support checks, history planning, native/KV computation, and synchronization. Source acquisition and common-frontier planning are measured once per image and added once per bundle. Output copying, rescoring, and replay comparisons are outside the timer; preparation of the BF16 weight copies and cold JIT compilation are recorded separately. Seven-repeat FIL bundle ranges span 18--79\,ms across the four images. The measured differences of a few milliseconds therefore describe this run, not a stable speed ordering. Ordinary exact execution averages 10.4255\,s per bundle; its larger cost includes repeated histories already avoided by history dense.

For cross-pair reuse, each image has 29 repeated states: native and seven amplitudes for each of four regions. The input audit counts a group as identical only when all its copies agree. Table~\ref{tab:cross-pair-inputs} refers to normalized inputs before FIL, not to the same-pair method comparison of Eq.~\eqref{eq:matched-axes}. It is based on saved input-audit records, without identifying the upstream operator that first introduces the difference.

\begin{table}[!htbp]\centering\small
\caption{Cross-pair input audit at the common precision range in the region-pair study. Max difference is across repeated copies of the same native or single-region state after normalization.}
\label{tab:cross-pair-inputs}
\begin{tabular}{rrrr}\toprule
Index & COCO ID & Identical groups / 29 & Max input difference\\\midrule
0 & 529850 & 1/29 & $1.43\times10^{-6}$\\
3 & 302789 & 8/29 & $1.19\times10^{-6}$\\
13 & 315577 & 8/29 & $1.79\times10^{-6}$\\
22 & 221911 & 8/29 & $1.10\times10^{-6}$\\
\bottomrule\end{tabular}\end{table}

\FloatBarrier
\section{Color Tokens and Model Configuration}
\label{app:tokens}
The reported Qwen studies use \texttt{Qwen/Qwen3-VL-4B-Instruct} with model and tokenizer revision
\begin{center}\small\texttt{ebb281ec70b05090aa6165b016eac8ec08e71b17}.\end{center}
Loading stored parameters into FP32 does not recover unrounded training-time weights; the reference is FP32 execution of this checkpoint. The run configuration specifies Transformers 5.12.1, PyTorch 2.11.0+cu130, FP32 model computation with eager text/vision attention, disabled TF32, and NVIDIA GH200 120GB. Table~\ref{tab:tokens} lists the answer groups; their spellings were checked against \texttt{tokenizer.json} of the same revision.

\paragraph{Input sequence and token positions.}
The checkpoint's chat template receives one user message containing the image followed by the question, with \texttt{add\_generation\_prompt=True}. There is no system message, tool description, or image-number prefix. The user header is \verb+<|im_start|>user+ followed by a newline, then \verb+<|vision_start|>+, 196 \verb+<|image_pad|>+ tokens, \verb+<|vision_end|>+, and the question without an intervening separator. The message ends with \verb+<|im_end|>+ and a newline; \verb+<|im_start|>assistant+ and a newline form the generation prompt. With zero-based sequence positions, the visual tokens occupy 4--199. A merged visual cell $(r,c)$ therefore maps to
\[
 k(r,c)=4+14r+c,\qquad 0\le r,c<14.
\]
Question length changes the total sequence length, but not this visual offset. All synthetic inputs have 218 tokens; the four region-pair inputs have 217.

For pair $p$, define candidates as equivalent at position $k$ when their FP32 intervention states $X_{ij,0:k}$, including position $k$, are identical under the same question. The full-branch frontier $f_p$ is the first $k$ at which all 64 candidates have distinct prefix histories. This definition uses the formed states, so rounded-away changes do not count as a branch. For disjoint supports whose first changed token distinguishes all eight strengths, it reduces to $f_p=\max(\min S_A,\min S_B)$, where $S_A,S_B$ are the changed-token supports in sequence coordinates. The synthetic panels satisfy this condition: their first supported positions are 33, 40, 117, and 124. Coalescing short segments leaves $f_p$ unchanged. The region-pair comparison uses $F=\max_p f_p$ across the six pairs, as specified in Appendix~\ref{app:region-inputs}.

\begin{table}[htbp]
\centering\small
\caption{Fixed color-token groups. Each entry gives token ID followed by its decoded string. The symbol \texttt{\textvisiblespace} denotes exactly one leading ASCII space; all other characters are literal.}
\label{tab:tokens}
\begin{tabular}{lp{.80\textwidth}}\toprule
Color & Token ID: decoded string\\\midrule
red & 1151: \texttt{red}; 2518: \texttt{\textvisiblespace red}; 3731: \texttt{\textvisiblespace Red}; 6033: \texttt{Red}; 6504: \texttt{RED}\\[4pt]
green & 6176: \texttt{\textvisiblespace green}; 7840: \texttt{\textvisiblespace Green}; 13250: \texttt{green}; 19576: \texttt{Green}; 55954: \texttt{GREEN}\\[4pt]
blue & 6303: \texttt{\textvisiblespace blue}; 8697: \texttt{\textvisiblespace Blue}; 10331: \texttt{Blue}; 12203: \texttt{blue}; 64330: \texttt{BLUE}\\[4pt]
yellow & 13753: \texttt{\textvisiblespace yellow}; 25462: \texttt{\textvisiblespace Yellow}; 27869: \texttt{yellow}; 47699: \texttt{Yellow}; 97029: \texttt{YELLOW}\\\bottomrule
\end{tabular}
\end{table}

For each lowercase color name $c$, the group builder tries the lowercase, initial-capital, uppercase, space-prefixed lowercase, and space-prefixed initial-capital spellings, with special tokens disabled. It retains a spelling only if it encodes as one token and decoding, stripping surrounding whitespace, and lowercasing returns $c$. IDs are deduplicated within each group and sorted; any overlap between groups causes an error. Multi-token spellings are excluded, rather than scored by multiplying conditional probabilities. Leading-space uppercase, trailing-space variants, and other spellings are not added by this rule. The 20 IDs are pairwise disjoint. The reported output is a next-token event, not a multi-token color-answer score.

\FloatBarrier
\section{Synthetic Stimuli, Donors, and Token Supports}
\label{app:synthetic}
\paragraph{Source and scene construction.}
The main families are the validation-source indices 208--223, with scene identifiers $133000000+i$.

The image has $448\times448$ pixels and four labeled panels. Labels A, B, C, D have indices $0,1,2,3$. With layout offset $\ell$, label $k$ occupies box $(k+\ell)\bmod4$, where boxes 0--3 (upper left, upper right, lower left, lower right) have inclusive coordinates
\[
 (42,70,190,196),\ (258,70,406,196),\
 (42,286,190,412),\ (258,286,406,412).
\]
The renderer draws a two-pixel black outline and a 24-pixel default-font label at $(x_0+64,y_0-32)$. For scene identifier $s$, each background pixel at zero-based row $y$ and column $x$ has all three channels equal to $170+15[(\lfloor y/14\rfloor+\lfloor x/14\rfloor+s)\bmod2]$. This replaces only the renderer's RGB $(245,245,245)$ background, leaving panels, outlines, and labels intact.

The class order is red, green, blue, yellow, with palette
\[
 P=\bigl[(205,45,45),(45,180,65),(45,75,210),(215,200,40)\bigr].
\]
For source index $i$, we use NumPy's default PCG64 generator with seed $164191009+i$ (the protocol seed 1191009 plus two fixed offsets, 133000000 and 30000000, plus $i$). Six calls to \texttt{Generator.integers} with the default \texttt{int64} dtype sample uniformly, in this order, the four panel classes from $\{0,1,2,3\}$ with replacement, the first edited label $a$ from $\{0,1,2,3\}$, a class offset $o$ from $\{1,2,3\}$, the $4\times3$ native RGB jitters from $\{-18,\ldots,18\}$, three donor-A RGB jitters from the same range, and the layout $\ell$ from $\{0,1,2,3\}$. Donor A's class is $(c_a+o)\bmod4$. Table~\ref{tab:synthetic-rgb} gives the resulting RGB values.

\paragraph{Questions and donors.}
Each family uses one question about label $a$: ``What color is panel \emph{L}? Answer with one color word.'' Table~\ref{tab:synthetic-scenes} specifies \emph{L}. Donor A changes only panel $a$ to its sampled different class, with the recorded RGB jitter. The second edited label is $b=(a+1)\bmod4$. Donor B is constructed separately from the native image: it changes only panel $b$ to $P[(c_b+1)\bmod4]$, without RGB jitter. Every other panel keeps its native RGB in both donors. Unlike the natural-image donors, these donors do not blend pixels. The same question is used for native and both donor forwards.

\paragraph{Pixel cells and hidden supports.}
The Qwen3-VL processor produces $28\times28$ patches of size $16\times16$, merged into $14\times14$ row-major visual tokens. Each merged cell covers $32\times32$ pixels. A token belongs to a panel's cover if its cell intersects the inclusive box: $32c<x_1+1$, $32(c+1)>x_0$, $32r<y_1+1$, and $32(r+1)>y_0$. Each cover contains 25 cells, including boundary cells, and the two covers are disjoint. These cells are mapped to their positions in the 218-token input. Both donor states are taken after decoder 12, and only the donor-minus-native differences on the corresponding cover are inserted into the initial family. Because attention is global, these supports are geometric and do not isolate object representations. The executor merges segments shorter than 32 tokens with the next, finer partition. For family 208, histories first split at token 33 and become fully distinct at token 40. Coalescing starts separate execution at token 33, but tokens 33--39 keep dense MLP computation; FIL starts at token 40.

\begin{table}[!htbp]
\centering\small
\caption{All 16 synthetic configurations. Scene ID is $133000000+i$ and seed is $164191009+i$. Native class tuples are ordered A--D; R/G/B/Y denote red/green/blue/yellow. ``Donor A'' gives the queried/first edited label and its new class; ``Donor B'' gives the separately edited label and its new class. Frontiers are zero-based; active count is $218-$frontier.}
\label{tab:synthetic-scenes}
\begin{tabular}{rclccrr}\toprule
Index $i$ & Layout & Native classes & Donor A & Donor B & Frontier & Active\\\midrule
208 & 3 & BBGY & B$\to$G & C$\to$B & 40 & 178\\
209 & 1 & GGBB & D$\to$G & A$\to$B & 40 & 178\\
210 & 1 & GGBG & D$\to$R & A$\to$B & 40 & 178\\
211 & 3 & YRYG & A$\to$R & B$\to$G & 124 & 94\\
212 & 1 & GBBB & C$\to$Y & D$\to$Y & 124 & 94\\
213 & 0 & GRRY & A$\to$B & B$\to$G & 40 & 178\\
214 & 3 & YRYG & B$\to$G & C$\to$R & 40 & 178\\
215 & 1 & BYBB & B$\to$R & C$\to$Y & 124 & 94\\
216 & 0 & GRBG & A$\to$Y & B$\to$G & 40 & 178\\
217 & 0 & GYYG & D$\to$Y & A$\to$B & 124 & 94\\
218 & 3 & GGBR & B$\to$R & C$\to$Y & 40 & 178\\
219 & 3 & YRBY & A$\to$B & B$\to$G & 124 & 94\\
220 & 1 & BBBY & C$\to$Y & D$\to$R & 124 & 94\\
221 & 2 & BGBB & A$\to$Y & B$\to$B & 124 & 94\\
222 & 3 & RRYY & B$\to$B & C$\to$R & 40 & 178\\
223 & 1 & BGYG & A$\to$Y & B$\to$B & 117 & 101\\
\bottomrule\end{tabular}\end{table}

\begin{table}[!htbp]
\centering\footnotesize
\caption{Recorded RGB fills, ordered A--D, and replacement RGB for donor A. Donor B uses the exact palette RGB of the class listed in Table~\ref{tab:synthetic-scenes}. Values are unsigned image-channel integers.}
\label{tab:synthetic-rgb}
\begin{tabular}{rlllll}\toprule
Index & Native A & Native B & Native C & Native D & Donor A replacement\\\midrule
208 & (59,79,202) & (41,80,193) & (62,177,81) & (204,186,28) & (28,187,64)\\
209 & (30,188,53) & (63,195,78) & (50,62,197) & (54,65,225) & (46,192,65)\\
210 & (38,194,61) & (30,191,56) & (41,72,214) & (52,188,71) & (193,57,48)\\
211 & (223,193,35) & (187,33,43) & (224,204,56) & (40,178,76) & (211,50,61)\\
212 & (54,182,75) & (57,66,203) & (50,60,204) & (30,91,202) & (218,213,27)\\
213 & (63,175,80) & (202,60,28) & (194,34,51) & (207,195,38) & (57,75,223)\\
214 & (198,187,22) & (201,56,57) & (202,209,34) & (34,174,73) & (52,166,63)\\
215 & (55,61,192) & (216,210,42) & (32,62,201) & (33,93,227) & (219,46,43)\\
216 & (33,170,69) & (208,53,32) & (53,76,203) & (32,172,61) & (220,211,35)\\
217 & (48,165,55) & (211,208,36) & (223,195,58) & (40,180,73) & (230,212,39)\\
218 & (46,171,51) & (56,170,80) & (53,61,208) & (209,41,32) & (193,49,57)\\
219 & (210,185,52) & (217,50,49) & (52,92,201) & (199,200,40) & (40,77,200)\\
220 & (46,65,193) & (33,82,199) & (60,72,222) & (199,186,48) & (219,213,30)\\
221 & (58,58,211) & (59,194,55) & (48,82,213) & (38,90,221) & (209,190,34)\\
222 & (209,54,48) & (223,28,39) & (207,195,25) & (201,209,27) & (52,68,203)\\
223 & (48,77,211) & (47,197,76) & (211,190,51) & (28,191,66) & (230,202,41)\\
\bottomrule\end{tabular}\end{table}

\begin{figure}[!htbp]\centering
\includegraphics[width=\linewidth]{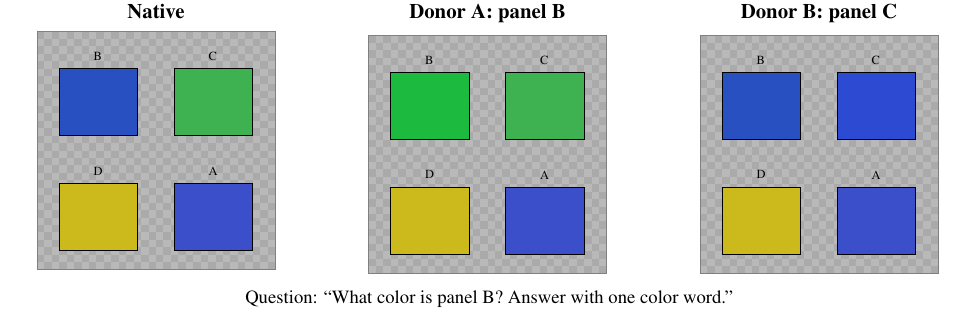}
\caption{Synthetic family 208: native and separate donor images, drawn from the recorded layout and RGB values. Typography is schematic. Donor A changes panel B, donor B changes panel C, and the question asks about B. The intervention grid combines the supported hidden differences.}
\label{fig:synthetic}
\end{figure}

\end{document}